\newcommand{\tr}{\mbox{$^{\top}$}}
\def\R{{\rm I} \! {\rm R}}
\newcommand{\eq}[1]{(\ref{eq:#1})}
\newcommand{\fig}[1]{fig~\ref{fig:#1}}
\newcommand{\sect}[1]{section~\ref{sec:#1}}
\newcommand{\pro}[1]{Proposition~\ref{pro:#1}}
\newcommand{\SKIP}[1]{} 
\newcommand{\mbegin} {\left [ \begin{array}}
\newcommand{\mend}   {\end{array} \right ]}
\newcommand{\detbegin} {\left | \begin{array}}
\newcommand{\detend}   {\end{array} \right |}
\newcommand{\vbegin} {\left ( \begin{array}{c}}
\newcommand{\vend} {\end{array}\right )}
\def\squareforqed{\hbox{\rlap{$\sqcap$}$\sqcup$}}
\def\qed{\ifmmode\squareforqed\else{\unskip\nobreak\hfil
	\penalty50\hskip1em\null\nobreak\hfil\squareforqed
	\parfillskip=0pt\finalhyphendemerits=0\endgraf}\fi}
\def\vec#1{\mathchoice%
	{\mbox{\bf $\displaystyle\bf#1$}}
	{\mbox{\bf $\textstyle\bf#1$}}
	{\mbox{\bf $\scriptstyle\bf#1$}}
	{\mbox{\bf $\scriptscriptstyle\bf#1$}}}
\def\v#1{\protect\vec #1}
\newcommand{\showeqnlabel}{
	\hbox to 0pt{\quad\quad\relax\fbox{\scriptsize\rm\eqnlblx}%
	\gdef\eqnlblx{xxxx}}} \newcommand{\eqnlblx}{}
\def\@eqnnum{\rm (\theequation)\showeqnlabel}
\newcommand{\nofig}[1]{\centerline{\bf Figure here}}
\def\mat#1{\mathchoice{\mbox{\bf$\displaystyle\tt#1$}}
	{\mbox{\bf$\textstyle\tt#1$}}
	{\mbox{\bf$\scriptstyle\tt#1$}}
	{\mbox{\bf$\scriptscriptstyle\tt#1$}}}
\def\m#1{\protect\mat #1}
\def\etal{\emph{et al}.}
\def\tabref#1{Table~\ref{#1}}
\newif\ifsupp
\newif\ifarxiv
\newif\iffinal
\def\finalcopy{\global\finaltrue}
\newcommand{\myref}[1]{{\color{red}#1}}
\newcommand{\calD}{{\cal D}}
\newcommand{\out}{[0,1]}
\newcommand{\hatx}{\hat{x}}
\newcommand{\haty}{\hat{y}}
\newcommand{\hatvy}{\hat{\v y}}
\newcommand{\calK}{{\cal K}}
\newcommand{\figdir}{figures}
\newcommand{\citenew}[1]{(\cite{#1})}
\definecolor{orange}{rgb}{1,0.5,0}
\newcommand{\aj}[1]{{\color{magenta}\textbf{AJ: }#1}}
\newcommand{\rih}[1]{{\color{green}\textbf{RIH: }#1}}
\title{Calibration of Neural Networks using Splines Supplementary Material}
\title{Calibration of Neural Networks using \\ Splines}
\author{
  Kartik Gupta\textsuperscript{1,2}, Amir Rahimi\textsuperscript{1},
  Thalaiyasingam Ajanthan\textsuperscript{1},
  Thomas Mensink\textsuperscript{3},
  \\ \hspace*{2pt}\textbf{Cristian Sminchisescu\textsuperscript{3}, Richard Hartley\textsuperscript{1,3}}\\
  \hspace*{2pt}\textsuperscript{1}Australian National University, \textsuperscript{2}Data61, CSIRO, \textsuperscript{3}Google Research\\ \texttt{\{kartik.gupta,amir.rahimi,thalaiyasingam.ajanthan\}@anu.edu.au}\\
  \texttt{\{mensink,sminchisescu,richardhartley\}@google.com}
}
\begin{document}

\maketitle

\ifsupp
\appendix
Here, we first provide the proof of our main result, discuss more about top-$r$ calibration and spline-fitting, and then turn to additional experiments.
\section{Proof of Proposition 4.1}
We first restate our proposition below.
\begin{proposition}\label{pro:spline-deriv1}
If $h(t) = P(Y = k, f_k(X) \le s(t))$ as in (\myref{14}) of the main paper where $s(t)$ 
is the $t$-th fractile score. Then 
$h'(t) = P(Y = k ~|~ f_k(X) = s(t))$, where $h'(t) = dh/dt$.
\end{proposition}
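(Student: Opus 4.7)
The plan is to recognize $h$ as a composition and apply the chain rule, after decomposing the joint probability as an integral against the marginal density of $f_k(X)$.

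First, I would set notation: let $F(s) = P(f_k(X)\le s)$ denote the CDF of the score $f_k(X)$, and let $G(s) = P(Y=k,\, f_k(X)\le s)$ denote the sub-distribution function of the joint event. By the definition of the $t$-th fractile, $s(t) = F^{-1}(t)$, so $h(t) = G(s(t))$ is a composition, and the derivative will follow by the chain rule provided $G$ and $F^{-1}$ are differentiable.

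Second, I would write $G$ as an integral against the marginal density $p(s) = F'(s)$ of $f_k(X)$. By the law of total probability,
\[
G(s) \;=\; \int_{-\infty}^{s} P(Y=k \mid f_k(X)=u)\, p(u)\, du,
\]
so by the fundamental theorem of calculus $G'(s) = P(Y=k \mid f_k(X)=s)\, p(s)$. For the inverse CDF, the standard identity $s'(t) = 1/F'(s(t)) = 1/p(s(t))$ holds wherever $p(s(t)) > 0$. Combining these via the chain rule,
\[
h'(t) \;=\; G'(s(t))\, s'(t) \;=\; P(Y=k \mid f_k(X) = s(t))\, p(s(t))\cdot \frac{1}{p(s(t))} \;=\; P(Y=k \mid f_k(X)=s(t)),
\]
which is the claim.

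The main obstacle is regularity: the argument requires the score $f_k(X)$ to have an absolutely continuous distribution with positive density in a neighborhood of $s(t)$, so that both $F^{-1}$ is differentiable and the conditional probability $P(Y=k\mid f_k(X)=s)$ is well-defined (as a version of the Radon--Nikodym derivative of $G$ with respect to $F$). I would state these mild regularity assumptions up front, noting they are standard in the calibration setting where the predicted score is treated as a continuous random variable; the rest is then a routine chain-rule computation as above.
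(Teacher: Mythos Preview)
Your proof is correct and follows essentially the same route as the paper: both apply the chain rule to $h(t)=G(s(t))$, factor the derivative of the joint as the conditional times the marginal density of $f_k(X)$, and cancel that density against $s'(t)$ using the fractile identity $F(s(t))=t$. The only difference is cosmetic---you introduce explicit notation for the CDF, density, and sub-distribution function and state the regularity assumptions, whereas the paper writes the same computation in the informal shorthand $P(f_k(X)=s(t))$ for the density and appeals directly to $P(f_k(X)\le s(t))=t$.
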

\begin{proof}
The proof is using the fundamental relationship between the Probability Distribution Function (PDF) and the Cumulative Distribution Function (CDF) and it is provided here for completeness.
Taking derivatives, we see (writing $P(k)$ instead of $P(Y=k)$):
\begin{align}
\begin{split}
h'(t) &= P(k, f_k(X) = s(t)) ~.~ s'(t) \\
       &= P(k ~|~ f_k(X) = s(t)) ~.~ P(f_k(X) = s(t)) ~.~ s'(t) \\
       &= P(k ~|~ f_k(X) = s(t)) ~.~\frac{d}{dt} \big(P(f_k(X) \le s(t))\big) \\
       &= P(k ~|~ f_k(X) = s(t))~.~ \frac{d}{dt} (t) \\
       &= P(k ~|~ f_k(X) = s(t)) ~.
\end{split}
\end{align}
The proof relies on the equality $P(f_k(X) \le s(t)) = t$.
In words:
$s(t)$ is the 
value that a fraction $t$ of the scores are less than or equal. This equality
then says: the probability that a score is less than or equal to the value that a fraction
$t$
of the scores lie below, is (obviously) equal to $t$.
\end{proof}


\section{More on top-$r$ and within-top-$r$ Calibration}
In the main paper, definitions of top-$r$ and within-top-$r$ 
calibration are given in equations (\myref{4}) and (\myref{5}).  Here, a few more
details are given of how to calibrate the classifier $f$ for 
top-$r$ and within-top-$r$ calibration.

The method of calibration using splines described in this paper
consists of fitting a spline to the cumulative accuracy,
defined as $h_i$ in equation (\myref{11}) in the main paper.
For top-$r$ classification, the method is much the same
as for the classification for class $k$. Equation (\myref{11})
is replaced by sorting the data according to the 
$r$-th top score, then defining
\begin{align}
\begin{split}
\label{eq:cumsum-topr}
\tilde{h}_0 = h_0 &= 0~, \\
h_i &= h_{i-1} + \v 1(y^{(-r)} = 1) /N~, \\
\tilde{h}_i &= \tilde{h}_{i-1} + f^{(-r)}(\v x_i) /N~,
\end{split}
\end{align}
where $y^{(-r)}$ and $f^{(-r)}(\v x_i)$ are defined in the main
paper, equation (\myref{3}).  These sequences may then be used
both as a metric for the correct top-$r$ calibration and 
for calibration using spline-fitting as described.

For within-top-$r$ calibration, one 
sorts the data according to the sum of the top $r$ scores,
namely $\sum_{s=1}^r f^{(-s)}(\v x_i)$, then computes
\begin{align}
\begin{split}
\label{eq:cumsum-within-topr}
\tilde{h}_0 = h_0 &= 0~, \\
h_i &= h_{i-1} + \v 1\Big(\sum_{s=1}^r y^{(-s)} = 1\Big) \Big/N~, \\
\tilde{h}_i &= \tilde{h}_{i-1} + \sum_{s=1}^r f^{(-s)}(\v x_i) /N~,
\end{split}
\end{align}
As before, this can be used as a metric, or as the starting point
for within-top-$r$ calibration by our method.  Examples of
this type of calibration (graphs for uncalibrated networks in \fig{main-graphswn2} and \fig{main-graphswn3}) is given in the graphs provided in  \fig{densenet-calibratedwn2} and \fig{densenet-calibratedwn3} for within-top-2 predictions and within-top-3 predictions respectively.

It is notable that if a classifier is calibrated in
the sense of equation (\myref{1}) in the main paper (also called 
multi-class-calibrated), then it is also
calibrated for top-$r$ and within-top-$r$ classification.


\section{Least Square Spline Fitting}
Least-square fitting using cubic splines is a known technique.  However,
details are given here for the convenience of the reader.
Our primary reference is \citenew{Mckinley_cubicspline}, which we
adapt to least-squares fitting.  We consider the case
where the knot-points are evenly spaced.

We change notation from that used in the main paper by denoting
points by $(x, y)$ instead of $(u, v)$.
Thus, given knot points $(\hatx_i, \haty_i)_{k=1}^K$ one is required to
fit some points $(x_i, y_i)_{i=1}^N$.  
Given a point $x$, the corresponding spline value is given
by $y = \v a(x)\tr \m M \hatvy$, where $\hatvy$ is the vector of
values $\haty_i$.  The form of the vector $\v a(x)$ and 
the matrix $\m M$ are given in the following.

The form of the matrix $\m M$ is derived from equation (25)
in \cite{Mckinley_cubicspline}. Define the matrices

\[
\m A = \mbegin{ccccccccc}
 4 & 1\\
1 & 4 & 1 & \\
 & 1 & 4 & 1 & \\
 &   &   & \ddots \\
 &   &   &       & 1 & 4 & 1 \\
 &   &   &       &   & 1 & 4
\mend
~; ~~~
\m B = \frac{6}{h^2}\mbegin{rrrrrrr}
1 & -2 & 1 & \\
  & 1 & -2 & 1 & \\
  &   &   & \ddots \\
  &   &   &       & 1 & -2 & 1
\mend~,
\]
where $h$ is the distance between the knot points.
These matrices are of dimensions $K-2\times K-2$ and $K-2 \times K$
respectively.  Finally, let $\m M$ be the matrix
\[
\m M = \mbegin{c}
\v 0_K\tr \\ \m A^{-1} \m B \\ \v 0_K\tr \\ \m I_{K\times K}
\mend~.
\]
Here, $\v 0_K$ is a vector of zeros of length $K$,
and $\m I_{K\times K}$ is the identity matrix.
The matrix $\m M$ has dimension $2K \times K$.

Next, let the point $x$ lie between
the knots $j$ and $j+1$ and let
$u = x - \hatx_j$.
Then define the vector 
$\v v = \v a(x)$ by
values
\begin{align*}
v_j &= -u^3 / (6h) + u^2 / 2 - h u / 3~,\\
v_{j+1} &= u^3 / (6h) - h u / 6 ~,\\
v_{j+K} &= -u / h + 1~,\\
v_{j+1+K} &= u/h~,
\end{align*}
with other entries equal to $0$.

Then the value of the spline is given by
\[
y = \v a(x)\tr \m M \hatvy~,
\]
as required.  This allows us to fit the spline
(varying the values of $\hatvy$) to points $(x_i, y_i)$
by least-squares fit, as described in the main paper.

The above description is for so-called {\em natural} (linear-runout)
splines.  For quadratic-runout or cubic-runout splines
the only difference is that the first and last rows
of matrix $\m A$ are changed -- see \cite{Mckinley_cubicspline}
for details.

As described in the main paper, it is also possible to
add linear constraints to this least-squares problem,
such as constraints on derivatives of the spline.
This results in a linearly-constrained quadratic programming
problem.


\section{Additional Experiments}
We first provide the experimental setup for different datasets in \tabref{tab:setup}. Note, the calibration set is used for spline fitting in our method and then final evaluation is based on an  unseen test set.

\begin{table}[t]
    \centering
    \small
    \begin{tabular}{lcccc}
        \toprule
        Dataset & Image Size & \# class & Calibration set & Test set \\
        \midrule
        CIFAR-10 & $32\times 32$ & $10$ & $5000$  & $10000$ \\
        CIFAR-100 & $32\times 32$ & $100$ & $5000$  & $10000$ \\
        SVHN & $32\times 32$ & $10$ & $6000$ & $26032$ \\
        ImageNet & $224\times 224$ & $1000$ & $25000$ & $25000$\\
        \bottomrule
    \end{tabular}
    \vspace{1ex}
    \caption{\small \em Dataset splits used for all the calibration experiments. Note, ``calibration'' set is used for spline fitting in our method and calibration for the baseline methods and then different methods are evaluated on ``test'' set.}
    \label{tab:setup}
    \vspace{-2ex}
    \end{table}
\SKIP{
\begin{table}[t!]
\scriptsize
\begin{tabular}{ll|cccccc}
\toprule
Dataset &
  Model &
  Uncalibrated &
  Temp. Scaling &
  Vector Scaling &
  MS-ODIR &
  Dir-ODIR &
  \textbf{Ours (Spline)} \\ \midrule
\multirow{5}{*}{CIFAR-10}  & Resnet-110     & 0.01805                          & \textbf{0.00097}                  & 0.00176                            & {\underline{ 0.00140}}               & 0.00195                      & 0.00742                                                     \\
                           & Resnet-110-SD  & 0.01423                          & 0.00111                           & 0.00089                            & {\underline{ 0.00082}}               & \textbf{0.00073}             & 0.01087                                                     \\
                           & DenseNet-40    & 0.02256                          & 0.00435                           & 0.00409                            & {\underline{ 0.00395}}               & \textbf{0.00348}             & 0.01718                                                     \\
                           & Wide Resnet-32 & 0.01812                          & 0.00145                           & \textbf{0.00105}                   & {\underline{ 0.00124}}               & 0.00139                      & 0.00808                                                     \\
                           & Lenet-5        & 0.03545                          & 0.00832                           & 0.00831                            & \textbf{0.00631}            & 0.00804                      & {\underline{ 0.00686}}                                               \\ \midrule
\multirow{5}{*}{CIFAR-100} & Resnet-110     & 0.14270                          & 0.00885                           & {\underline{ 0.00649}}                      & 0.01425                     & 0.01190                      & \textbf{0.00570}                                            \\
                           & Resnet-110-SD  & 0.12404                          & {\underline{ 0.00762}}                     & 0.01311                            & 0.02120                     & 0.01588                      & \textbf{0.00584}                                            \\
                           & DenseNet-40    & 0.15901                          & {\underline{ 0.00437}}                     & \textbf{0.00368}                   & 0.02205                     & 0.00518                      & 0.01077                                                     \\
                           & Wide Resnet-32 & 0.14078                          & \textbf{0.00414}                  & {\underline{ 0.00548}}                      & 0.01915                     & 0.01099                      & 0.00916                                                     \\
                           & Lenet-5        & 0.14713                          & 0.00787                           & 0.01249                            & \textbf{0.00643}            & 0.02682                      & {\underline{ 0.00752}}                                               \\ \midrule
\multirow{2}{*}{ImageNet}  & Densenet-161   & 0.04266                          & 0.01051                           & \underline{0.00868}                            & 0.03372                     & 0.02536                      & \textbf{0.00278}                                            \\
                           & Resnet-152     & 0.04851                          & 0.01167                           & \underline{0.00776}                            & 0.04093                     & 0.02839                      & \textbf{0.00346}                                            \\ \midrule
SVHN                       & Resnet-152-SD  & 0.00485                          & \underline{0.00388}                           & 0.00410                            & 0.00407                     & \underline{0.00388}                      & \textbf{0.00155}                                            \\ 
\bottomrule
\end{tabular}
    \vspace{1ex}

    \caption{\em KS Error within top-2 prediction (with lowest in bold and second lowest underlined) on various image classification datasets and models with different calibration methods. 
    }
    \label{tab:res_KSE2}  
\end{table}
}
\SKIP{
\begin{table}[t!]
\scriptsize
\begin{tabular}{ll|cccccc}
\toprule
Dataset &
  Model &
  Uncalibrated &
  Temp. Scaling &
  Vector Scaling &
  MS-ODIR &
  Dir-ODIR &
  \textbf{Ours (Spline)} \\ \midrule
\multirow{5}{*}{CIFAR-10}  & Resnet-110     & 0.01805                          & \textbf{0.00097}                  & 0.00176                            & {\underline{ 0.00140}}               & 0.00195                      & 0.00277                                                     \\
                           & Resnet-110-SD  & 0.01423                          & 0.00111                           & 0.00089                            & {\underline{ 0.00082}}               & \textbf{0.00073}             & 0.00104                                                     \\
                           & DenseNet-40    & 0.02256                          & 0.00435                           & 0.00409                            & {\underline{ 0.00395}}               & \textbf{0.00348}             & 0.00571                                                     \\
                           & Wide Resnet-32 & 0.01812                          & 0.00145                           & \textbf{0.00105}                   & {\underline{ 0.00124}}               & 0.00139                      & 0.00537                                                     \\
                           & Lenet-5        & 0.03545                          & 0.00832                           & 0.00831                            & \textbf{0.00631}            & 0.00804                      & {\underline{ 0.00670}}                                               \\ \midrule
\multirow{5}{*}{CIFAR-100} & Resnet-110     & 0.14270                          & 0.00885                           & {\underline{ 0.00649}}                      & 0.01425                     & 0.01190                      & \textbf{0.00503}                                            \\
                           & Resnet-110-SD  & 0.12404                          & {\underline{ 0.00762}}                     & 0.01311                            & 0.02120                     & 0.01588                      & \textbf{0.00684}                                            \\
                           & DenseNet-40    & 0.15901                          & {\underline{ 0.00437}}                     & \textbf{0.00368}                   & 0.02205                     & 0.00518                      & 0.00724                                                     \\
                           & Wide Resnet-32 & 0.14078                          & \textbf{0.00414}                  & {\underline{ 0.00548}}                      & 0.01915                     & 0.01099                      & 0.01017                                                     \\
                           & Lenet-5        & 0.14713                          & 0.00787                           & 0.01249                            & \underline{0.00643}            & 0.02682                      & {\textbf{ 0.00518}}                                               \\ \midrule
\multirow{2}{*}{ImageNet}  & Densenet-161   & 0.04266                          & 0.01051                           & \underline{0.00868}                            & 0.03372                     & 0.02536                      & \textbf{0.00408}                                            \\
                           & Resnet-152     & 0.04851                          & 0.01167                           & \underline{0.00776}                            & 0.04093                     & 0.02839                      & \textbf{0.00247}                                            \\ \midrule
SVHN                       & Resnet-152-SD  & 0.00485                          & \underline{0.00388}                           & 0.00410                            & 0.00407                     & \underline{0.00388}                      & \textbf{0.00158}                                            \\ 
\bottomrule
\end{tabular}
    \vspace{1ex}

    \caption{\em KS Error within top-2 prediction (with lowest in bold and second lowest underlined) on various image classification datasets and models with different calibration methods. Note, for this experiment we use 14 knots for spline fitting.
    }
    \label{tab:res_KSE2}  
\end{table}
}

\SKIP{
\begin{table}[t!]
\scriptsize
\begin{tabular}{ll|cccccc}
\toprule
Dataset &
  Model &
  Uncalibrated &
  Temp. Scaling &
  Vector Scaling &
  MS-ODIR &
  Dir-ODIR &
  \textbf{Ours (Spline)} \\ \midrule
\multirow{5}{*}{CIFAR-10}  & Resnet-110     & 1.805                          & \textbf{0.097}                  & 0.176                            & {\underline{ 0.140}}               & 0.195                      & 0.742                                                     \\
                           & Resnet-110-SD  & 1.423                          & 0.111                           & 0.089                            & {\underline{ 0.082}}               & \textbf{0.073}             & 1.087                                                     \\
                           & DenseNet-40    & 2.256                          & 0.435                           & 0.409                            & {\underline{ 0.395}}               & \textbf{0.348}             & 1.718                                                     \\
                           & Wide Resnet-32 & 1.812                          & 0.145                           & \textbf{0.105}                   & {\underline{ 0.124}}               & 0.139                      & 0.808                                                     \\
                           & Lenet-5        & 3.545                          & 0.832                           & 0.831                            & \textbf{0.631}            & 0.804                      & {\underline{ 0.686}}                                               \\ \midrule
\multirow{5}{*}{CIFAR-100} & Resnet-110     & 14.270                          & 0.885                           & {\underline{ 0.649}}                      & 1.425                     & 1.190                      & \textbf{0.570}                                            \\
                           & Resnet-110-SD  & 12.404                          & {\underline{ 0.762}}                     & 1.311                            & 2.120                     & 1.588                      & \textbf{0.584}                                            \\
                           & DenseNet-40    & 15.901                          & {\underline{ 0.437}}                     & \textbf{0.368}                   & 2.205                     & 0.518                      & 1.077                                                     \\
                           & Wide Resnet-32 & 14.078                          & \textbf{0.414}                  & {\underline{ 0.548}}                      & 1.915                     & 1.099                      & 0.916                                                     \\
                           & Lenet-5        & 14.713                          & 0.787                           & 1.249                            & \textbf{0.643}            & 2.682                      & {\underline{ 0.752}}                                               \\ \midrule
\multirow{2}{*}{ImageNet}  & Densenet-161   & 4.266                          & 1.051                           & \underline{0.868}                            & 3.372                     & 2.536                      & \textbf{0.278}                                            \\
                           & Resnet-152     & 4.851                          & 1.167                           & \underline{0.776}                            & 4.093                     & 2.839                      & \textbf{0.346}                                            \\ \midrule
SVHN                       & Resnet-152-SD  & 0.485                          & \underline{0.388}                           & 0.410                            & 0.407                     & \underline{0.388}                      & \textbf{0.155}                                            \\ 
\bottomrule
\end{tabular}
    \vspace{1ex}

    \caption{\em KS Error within top-2 prediction (with lowest in bold and second lowest underlined) on various image classification datasets and models with different calibration methods. 
    }
    \label{tab:res_KSE2}  
\end{table}
}

\begin{table}[t!]
\scriptsize
\begin{tabular}{ll|cccccc}
\toprule
Dataset &
  Model &
  Uncalibrated &
  Temp. Scaling &
  Vector Scaling &
  MS-ODIR &
  Dir-ODIR &
  \textbf{Ours (Spline)} \\ \midrule
\multirow{5}{*}{CIFAR-10}  & Resnet-110     & 1.805                          & \textbf{0.097}                  & 0.176                            & {\underline{ 0.140}}               & 0.195                      & 0.277                                                     \\
                           & Resnet-110-SD  & 1.423                          & 0.111                           & 0.089                            & {\underline{ 0.082}}               & \textbf{0.073}             & 0.104                                                     \\
                           & DenseNet-40    & 2.256                          & 0.435                           & 0.409                            & {\underline{ 0.395}}               & \textbf{0.348}             & 0.571                                                     \\
                           & Wide Resnet-32 & 1.812                          & 0.145                           & \textbf{0.105}                   & {\underline{ 0.124}}               & 0.139                      & 0.537                                                     \\
                           & Lenet-5        & 3.545                          & 0.832                           & 0.831                            & \textbf{0.631}            & 0.804                      & {\underline{ 0.670}}                                               \\ \midrule
\multirow{5}{*}{CIFAR-100} & Resnet-110     & 14.270                          & 0.885                           & {\underline{ 0.649}}                      & 1.425                     & 1.190                      & \textbf{0.503}                                            \\
                           & Resnet-110-SD  & 12.404                          & {\underline{ 0.762}}                     & 1.311                            & 2.120                     & 1.588                      & \textbf{0.684}                                            \\
                           & DenseNet-40    & 15.901                          & {\underline{ 0.437}}                     & \textbf{0.368}                   & 2.205                     & 0.518                      & 0.724                                                     \\
                           & Wide Resnet-32 & 14.078                          & \textbf{0.414}                  & {\underline{ 0.548}}                      & 1.915                     & 1.099                      & 1.017                                                     \\
                           & Lenet-5        & 14.713                          & 0.787                           & 1.249                            & \underline{0.643}            & 2.682                      & {\textbf{ 0.518}}                                               \\ \midrule
\multirow{2}{*}{ImageNet}  & Densenet-161   & 4.266                          & 1.051                           & \underline{0.868}                            & 3.372                     & 2.536                      & \textbf{0.408}                                            \\
                           & Resnet-152     & 4.851                          & 1.167                           & \underline{0.776}                            & 4.093                     & 2.839                      & \textbf{0.247}                                            \\ \midrule
SVHN                       & Resnet-152-SD  & 0.485                          & \underline{0.388}                           & 0.410                            & 0.407                     & \underline{0.388}                      & \textbf{0.158}                                            \\ 
\bottomrule
\end{tabular}
    \vspace{1ex}

    \caption{\small {\bf Within-top-2 predictions.} \em KS Error (in \%) within-top-2 prediction (with lowest in bold and second lowest underlined) on various image classification datasets and models with different calibration methods. Note, for this experiment we use 14 knots for spline fitting.
    }
    \label{tab:res_KSEwn2}  
    \vspace{-3ex}
\end{table}

\begin{figure}
\centerline{
\includegraphics[width=\textwidth]{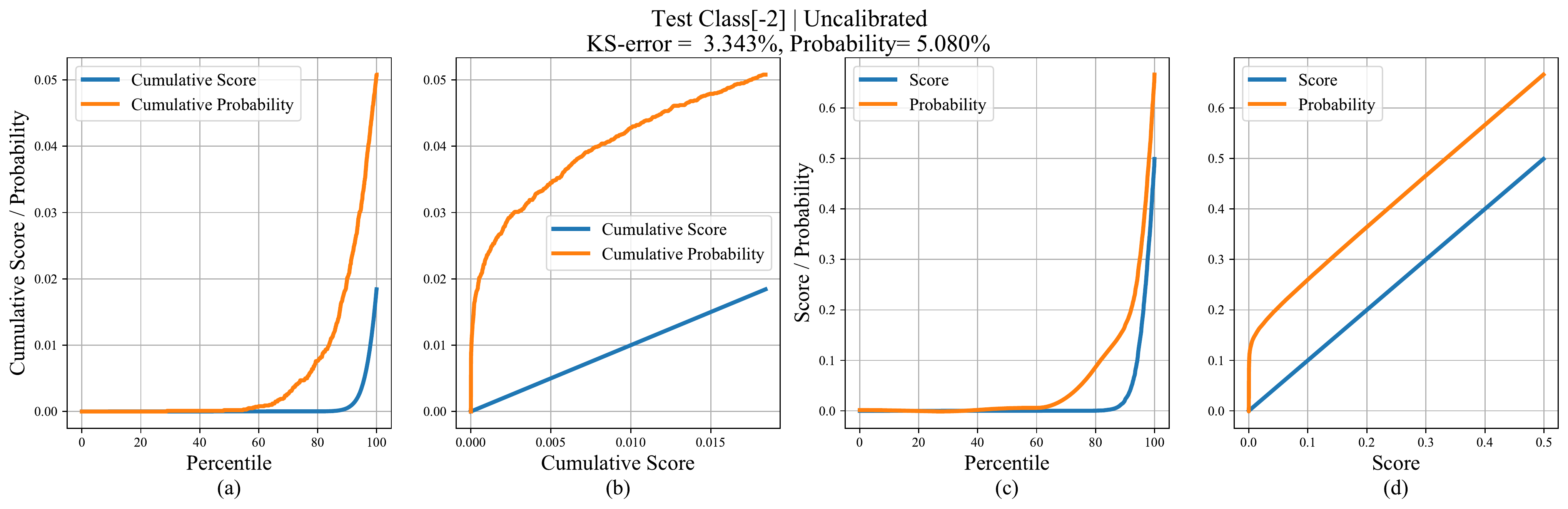}
}
\vspace{-2ex}
\caption{\small {\bf Top-2 predictions, Uncalibrated.} \em Calibration graphs for an uncalibrated DenseNet-40
\citenew{huang2017densely} trained on CIFAR-10 for top-2 class with a KS error of $3.343$\% on the
test set. Here (\textbf{a}) shows the plot of cumulative score and probability versus the
fractile of the test set, (\textbf{b}) shows the same information with the
horizontal axis warped so that the cumulative-score graph is a straight line.  
This is created as scatter plots of cumulative (score, score): blue and (score, probability): orange. 
If the network is perfectly calibrated, the probability line will be a straight line
coincident with the (score, score) line. This shows that the network is substantially
overestimating (score) the probability of the computation. (\textbf{c}) and (\textbf{d}) show plots of (non-cumulative) score and probability plotted against fractile, or score.  How these plots are produced is described in Section 4 of main paper.
}
\label{fig:main-graphs2}
\vspace{-2ex}
\end{figure}

\begin{figure}
\centerline{
\includegraphics[width=\textwidth]{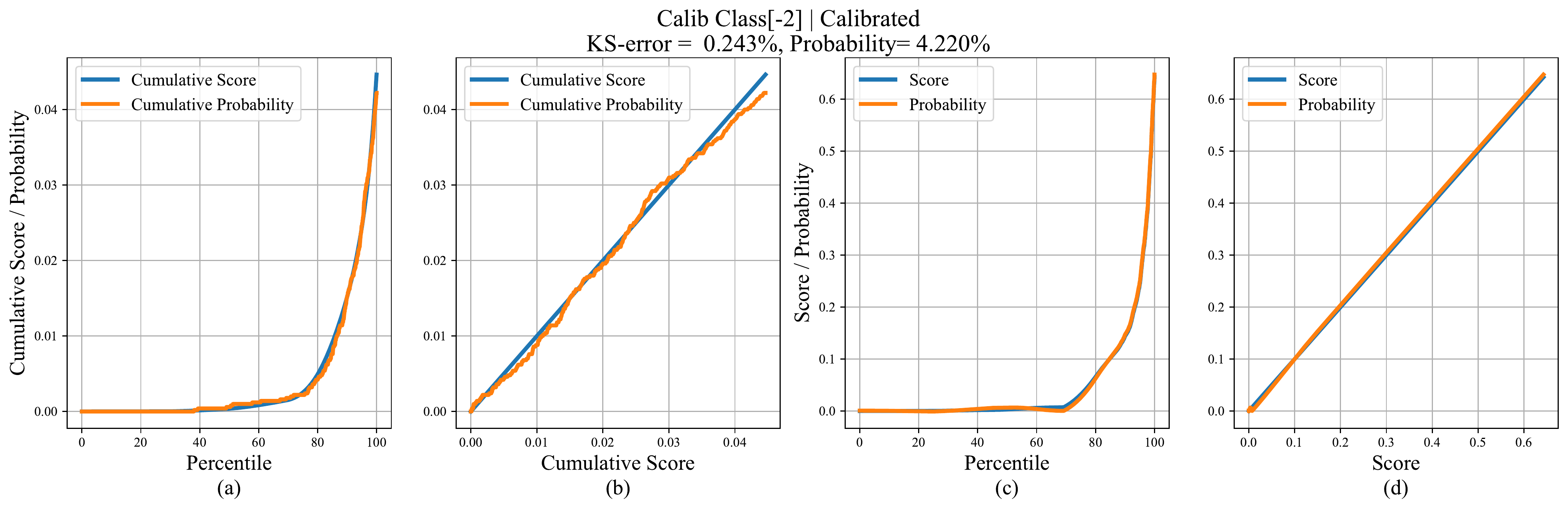} 
}
\vspace{-0.8ex}
\centerline{
\includegraphics[width=\textwidth]{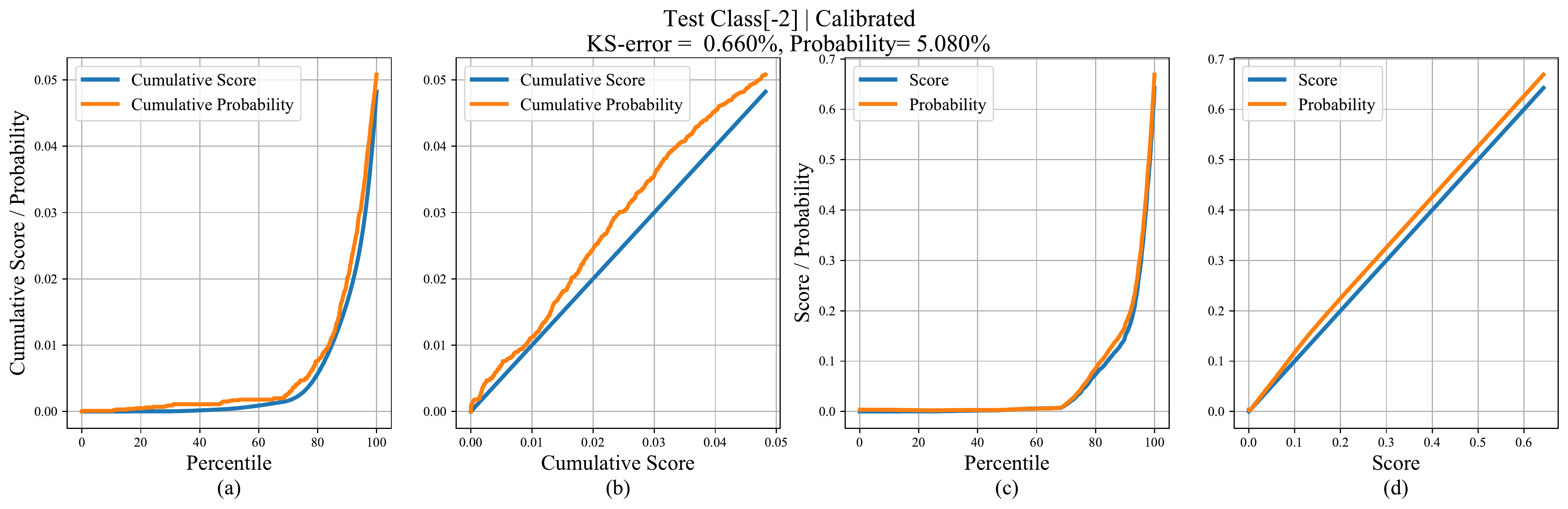} 
}
\vspace{-2ex}
\caption{
\small {\bf Top-2 predictions, Calibrated.} \em The result of the spline calibration method, on the
example given in \fig{main-graphs2} for top-$2$ calibration. 
A recalibration function $\gamma:
\R\rightarrow\R$
is used to adjust the scores, replacing $f_k(\v x)$ with $\gamma(f_k(\v x))$ 
(see Section 4 of main paper).  
As is seen, the network is now almost
perfectly calibrated when tested on the ``calibration'' set (\textbf{top row}) 
used to calibrate it. 
In \textbf{bottom row}, the recalibration function is tested
on a further set ``test''.  It is seen that the result is 
not perfect, but much better than the original results 
in \fig{main-graphs2}d.
}
\label{fig:densenet-calibrated2}
\vspace{-2ex}
\end{figure}

\begin{figure}
\centerline{
\includegraphics[width=\textwidth]{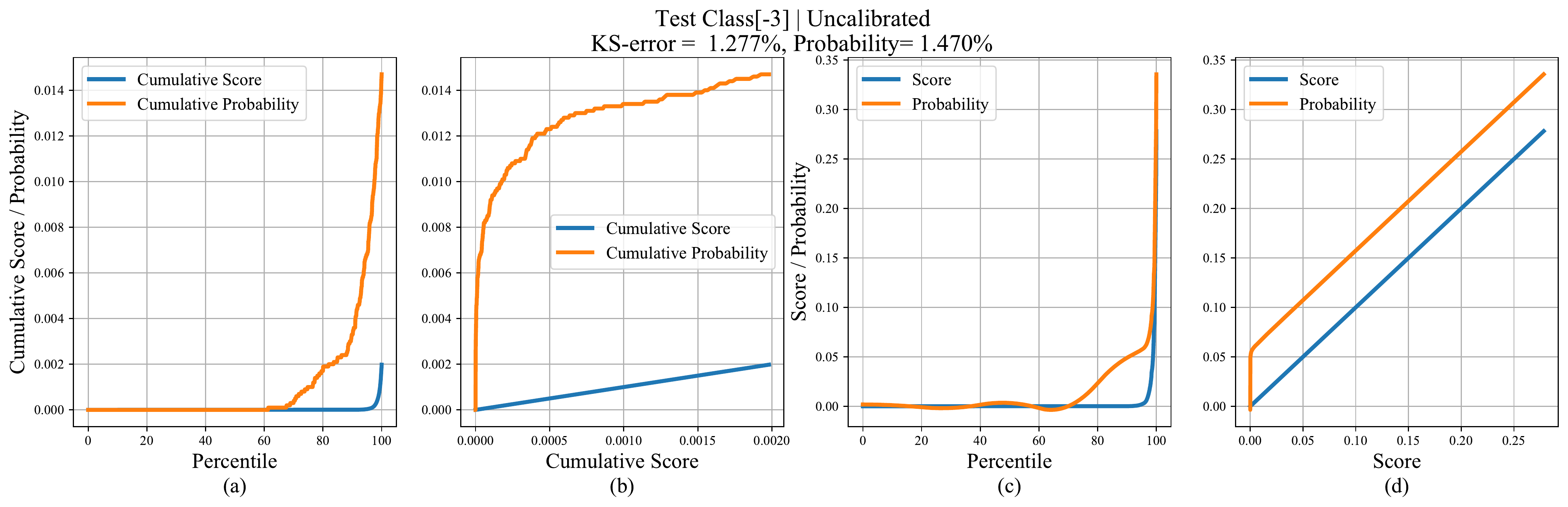}
}
\vspace{-2ex}
\caption{\small {\bf Top-3 predictions, Uncalibrated.} \em Calibration graphs for an uncalibrated DenseNet-40
 trained on CIFAR-10 for top-3 class with a KS error of $1.277$\% on the
test set. Here (\textbf{a}) shows the plot of cumulative score and probability versus the
fractile of the test set, (\textbf{b}) shows the same information with the
horizontal axis warped so that the cumulative-score graph is a straight line.  
(\textbf{c}) and (\textbf{d}) show plots of (non-cumulative) score and probability plotted against fractile, or score.
}
\label{fig:main-graphs3}
\vspace{-2ex}
\end{figure}

\begin{figure}
\centerline{
\includegraphics[width=\textwidth]{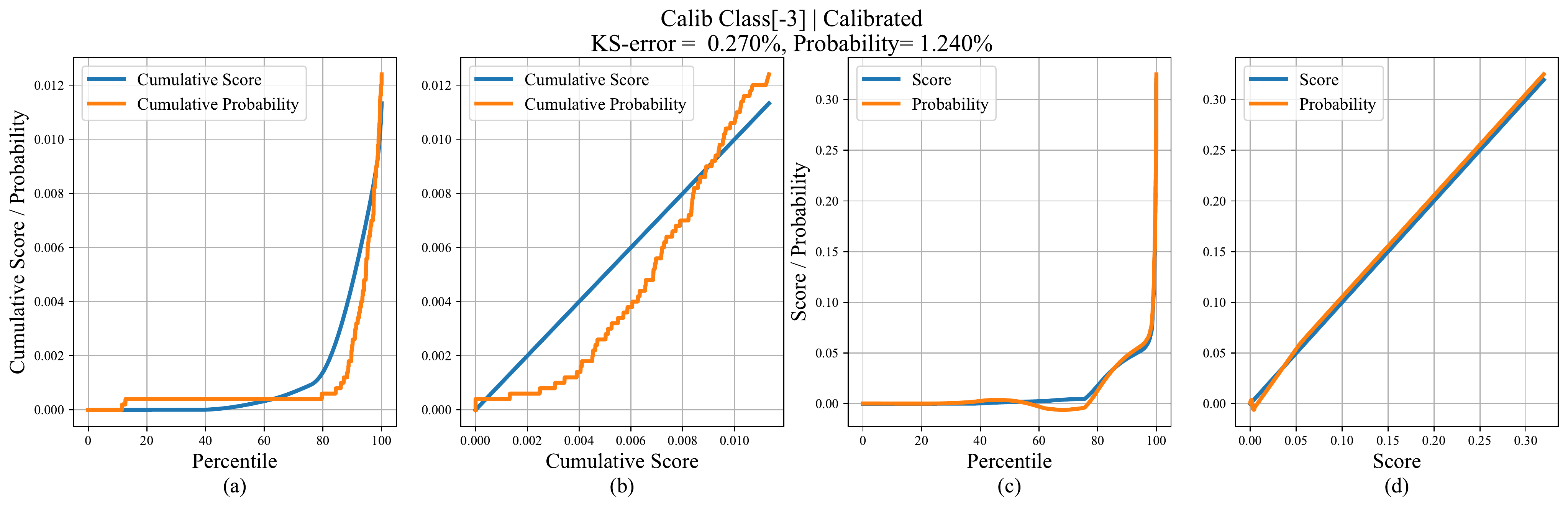} 
}
\vspace{-0.8ex}
\centerline{
\includegraphics[width=\textwidth]{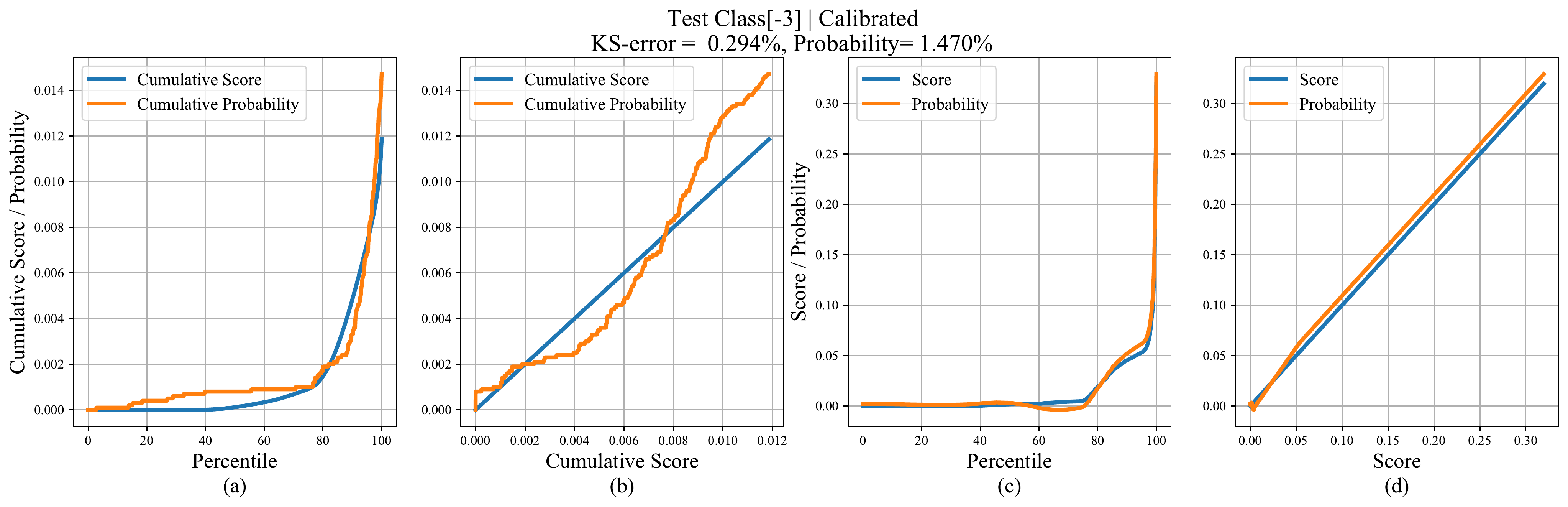} 
}
\vspace{-2ex}
\caption{
\small {\bf Top-3 predictions, Calibrated.} \em The result of the spline calibration method, on the
example given in \fig{main-graphs3} for top-$3$ calibration. 
A recalibration function $\gamma:
\R\rightarrow\R$
is used to adjust the scores, replacing $f_k(\v x)$ with $\gamma(f_k(\v x))$.
As is seen, the network is now almost
perfectly calibrated when tested on the ``calibration'' set (\textbf{top row}) 
used to calibrate it. 
In \textbf{bottom row}, the recalibration function is tested
on a further set ``test''.  It is seen that the result is 
not perfect, but much better than the original results 
in \fig{main-graphs3}d.
}
\label{fig:densenet-calibrated3}
\end{figure}

\begin{figure}
\centerline{
\includegraphics[width=\textwidth]{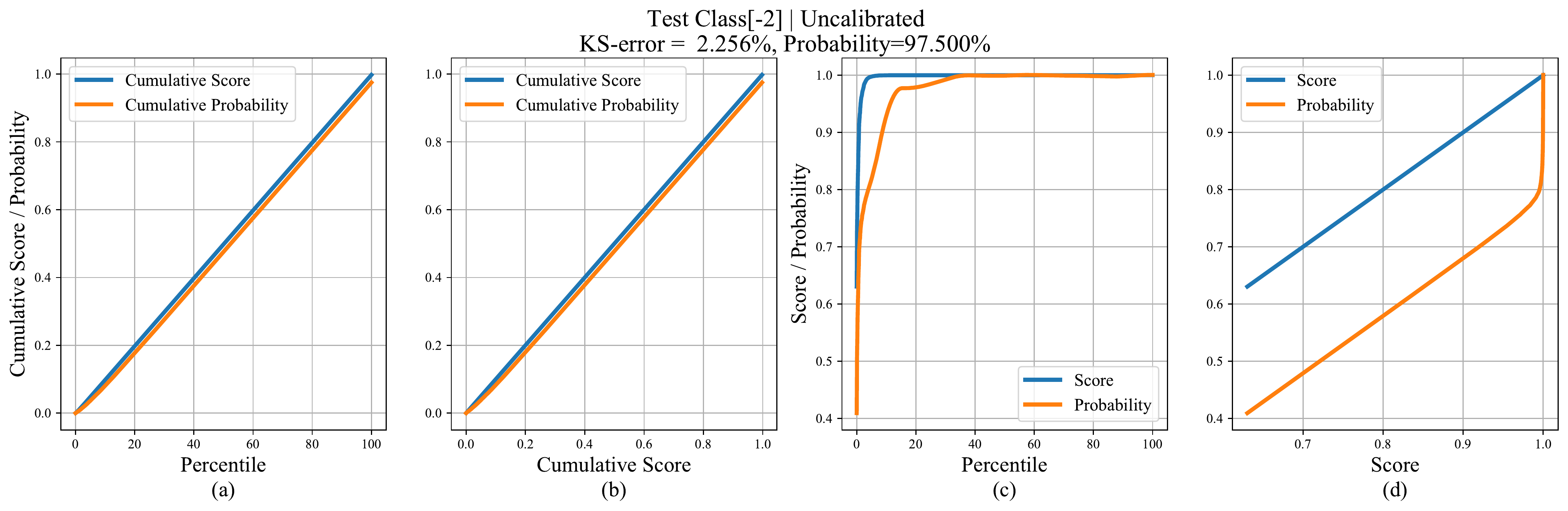}
}
\vspace{-2ex}
\caption{\small {\bf Within-top-2 predictions, Uncalibrated.} \em Calibration graphs for an uncalibrated DenseNet-40
 trained on CIFAR-10 for within-top-2 predictions with a KS error of $2.256$\% on the
test set. Here (\textbf{a}) shows the plot of cumulative score and probability versus the
fractile of the test set, (\textbf{b}) shows the same information with the
horizontal axis warped so that the cumulative-score graph is a straight line.  
(\textbf{c}) and (\textbf{d}) show plots of (non-cumulative) score and probability plotted against fractile, or score.
}
\label{fig:main-graphswn2}
\vspace{-2ex}
\end{figure}

\begin{figure}
\centerline{
\includegraphics[width=\textwidth]{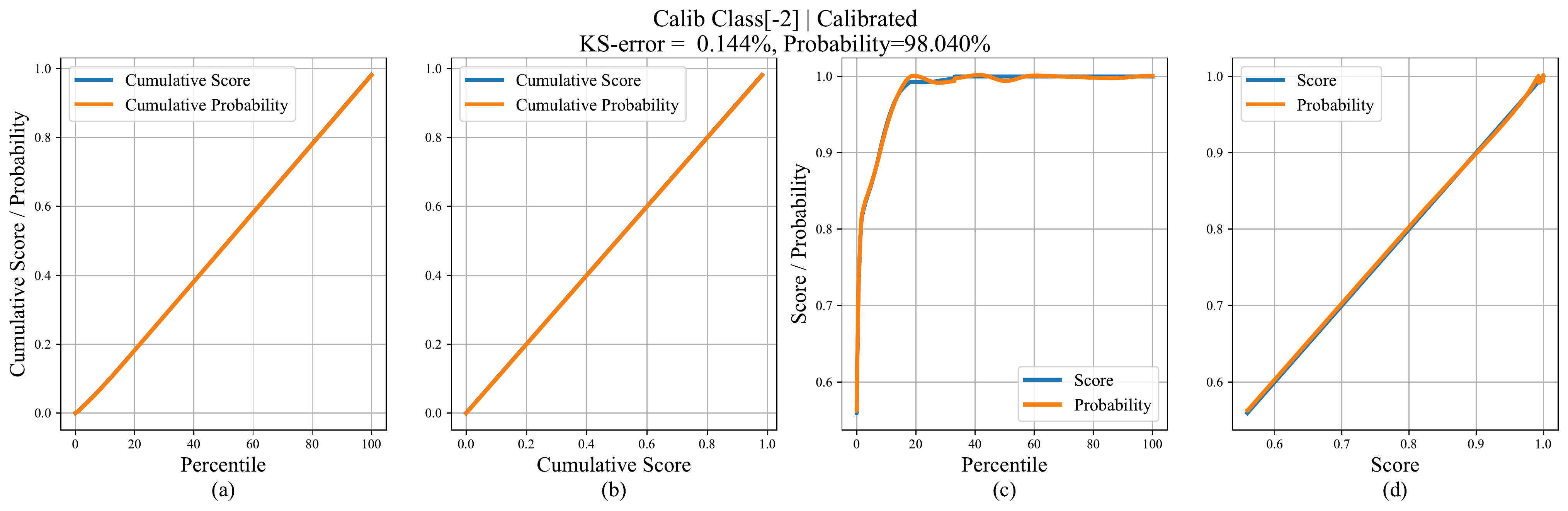} 
}
\vspace{-0.8ex}
\centerline{
\includegraphics[width=\textwidth]{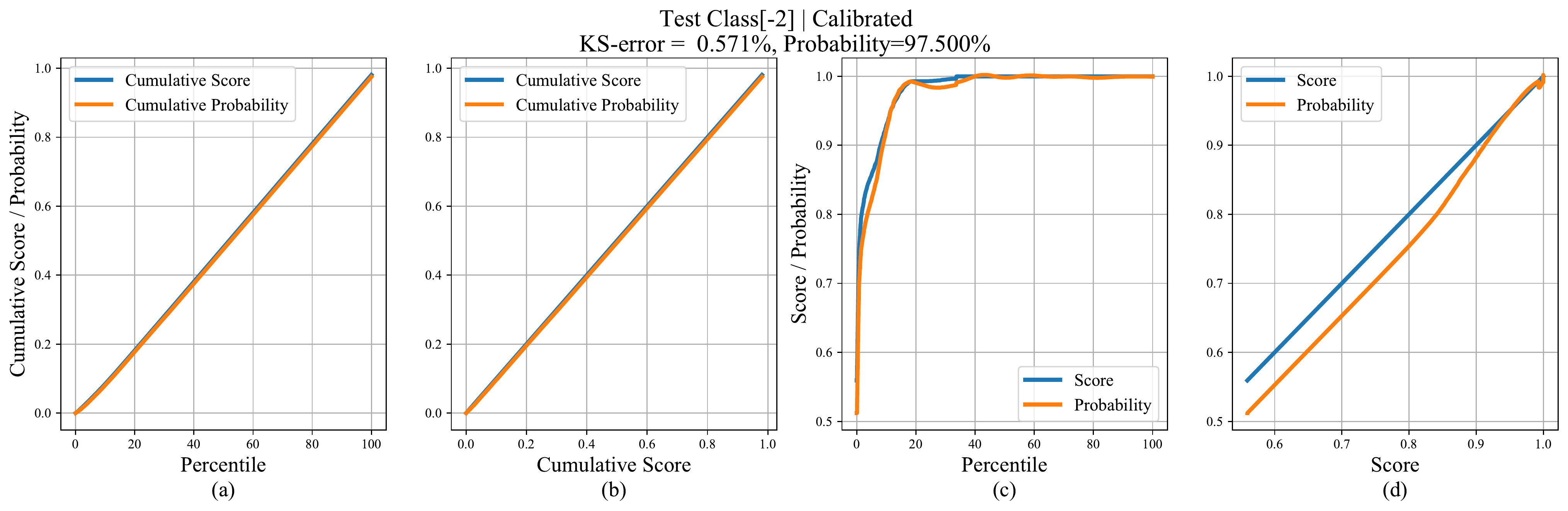} 
}
\vspace{-2ex}
\caption{
\small {\bf Within-top-2 predictions, Calibrated.} \em The result of the spline calibration method, on the
example given in \fig{main-graphswn2} for within-top-$2$ calibration. 
A recalibration function $\gamma:
\R\rightarrow\R$
is used to adjust the scores, replacing $f_k(\v x)$ with $\gamma(f_k(\v x))$.
As is seen, the network is now almost
perfectly calibrated when tested on the ``calibration'' set (\textbf{top row}) 
used to calibrate it. 
In \textbf{bottom row}, the recalibration function is tested
on a further set ``test''.  It is seen that the result is 
not perfect, but much better than the original results 
in \fig{main-graphswn2}d.
}
\label{fig:densenet-calibratedwn2}
\vspace{-2ex}
\end{figure}

\begin{figure}
\centerline{
\includegraphics[width=\textwidth]{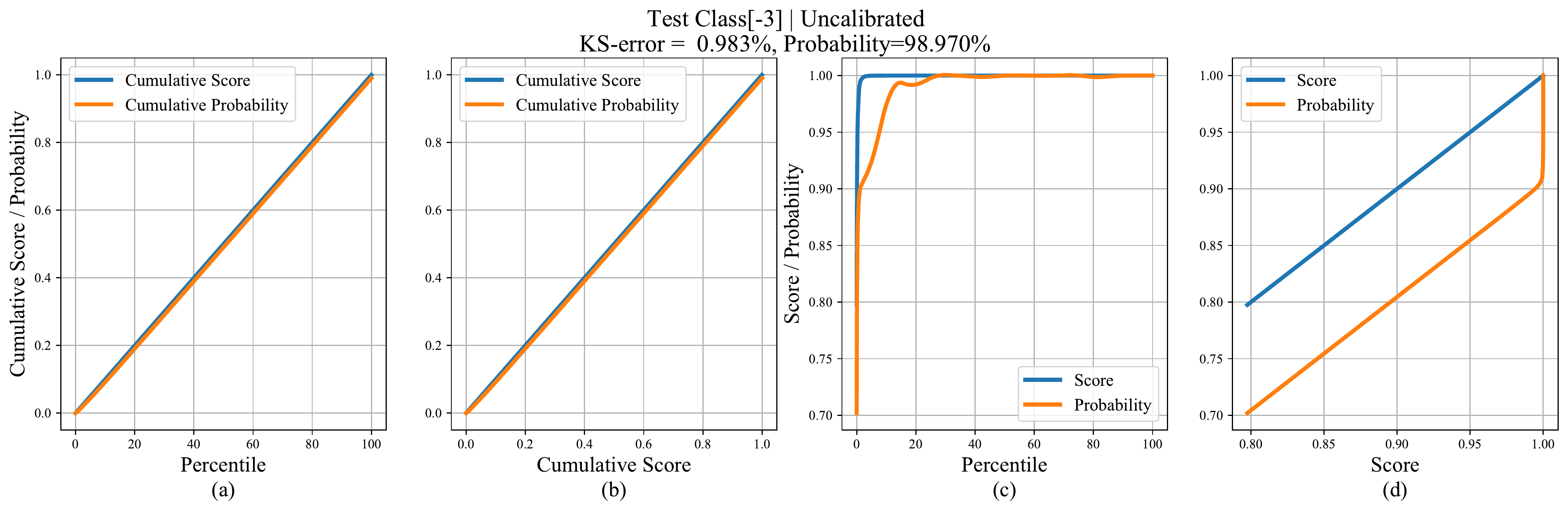}
}
\vspace{-2ex}
\caption{\small {\bf Within-top-3 predictions, Uncalibrated.} \em Calibration graphs for an uncalibrated DenseNet-40
 trained on CIFAR-10 for within-top-3 predictions with a KS error of $0.983$\% on the
test set. Here (\textbf{a}) shows the plot of cumulative score and probability versus the
fractile of the test set, (\textbf{b}) shows the same information with the
horizontal axis warped so that the cumulative-score graph is a straight line.  
(\textbf{c}) and (\textbf{d}) show plots of (non-cumulative) score and probability plotted against fractile, or score.
}
\label{fig:main-graphswn3}
\vspace{-2ex}
\end{figure}

\begin{figure}
\centerline{
\includegraphics[width=\textwidth]{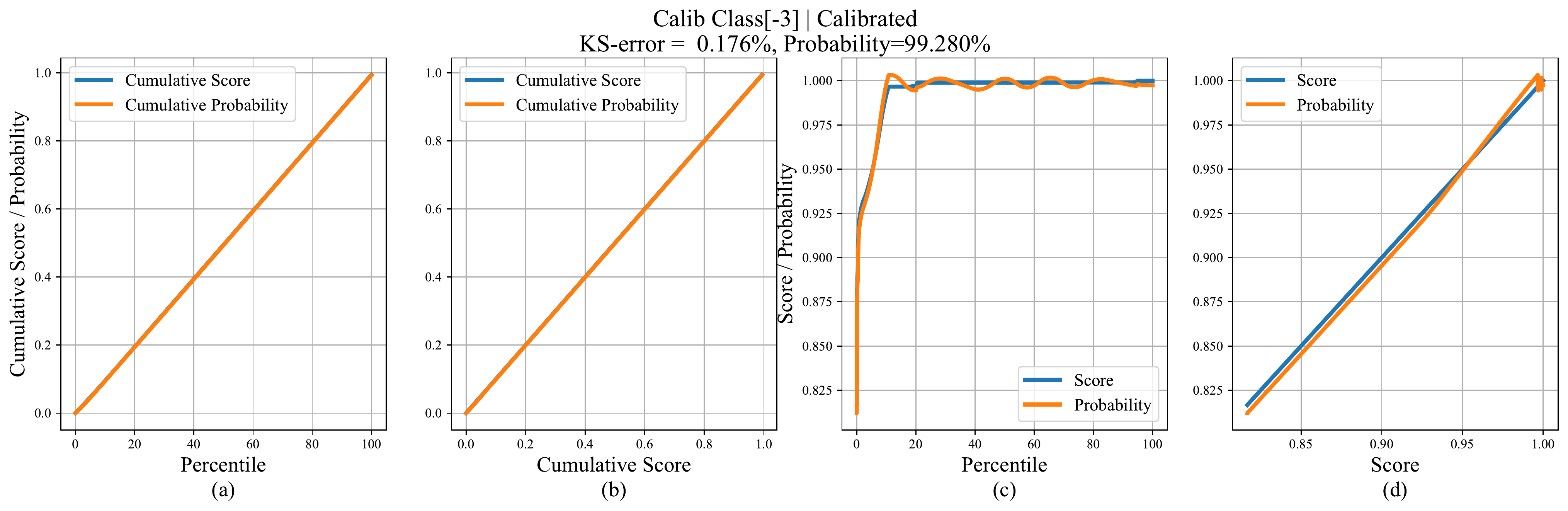} 
}
\vspace{-0.8ex}
\centerline{
\includegraphics[width=\textwidth]{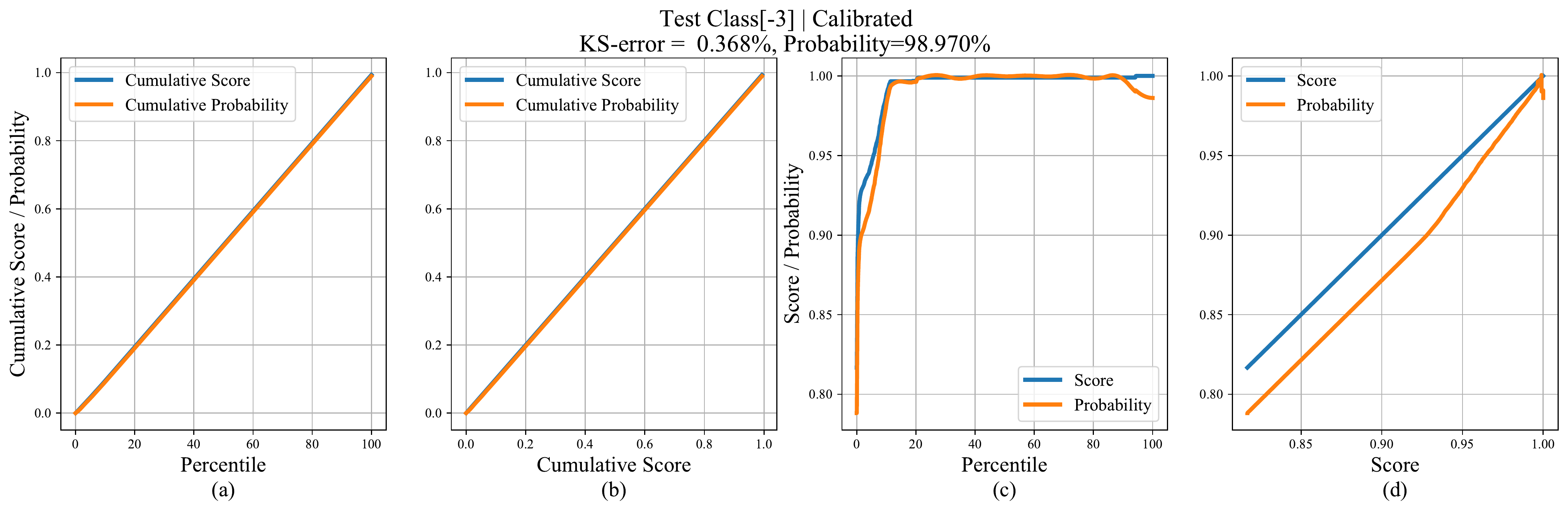} 
}
\vspace{-2ex}
\caption{
\small {\bf Within-top-3 predictions, Calibrated.} \em The result of the spline calibration method, on the
example given in \fig{main-graphswn3} for within-top-$3$ calibration. 
A recalibration function $\gamma:
\R\rightarrow\R$
is used to adjust the scores, replacing $f_k(\v x)$ with $\gamma(f_k(\v x))$.
As is seen, the network is now almost
perfectly calibrated when tested on the ``calibration'' set (\textbf{top row}) 
used to calibrate it. 
In \textbf{bottom row}, the recalibration function is tested
on a further set ``test''.  It is seen that the result is 
not perfect, but much better than the original results 
in \fig{main-graphswn3}d.
}
\label{fig:densenet-calibratedwn3}
\end{figure}

We also provide comparisons of our method against baseline methods for within-top-2 predictions (equation \myref{5} of the main paper) in \tabref{tab:res_KSEwn2} using KS error. Our method achieves comparable or better results for within-top-2 predictions. It should be noted that the scores for top-3 ($f^{(-3)}(\v x)$) or even top-4, top-5, etc., are very close to zero for majority of the samples (due to overconfidence of top-$1$ predictions).
Therefore the calibration error for top-$r$ with $r>2$ predictions is very close to zero and comparing different methods with respect to it is of little value.
Furthermore, for visual illustration, we provide calibration graphs of top-2 predictions in \fig{main-graphs2} and \fig{densenet-calibrated2} for uncalibrated and calibrated network respectively. Similar graphs for top-3, within-top-2, and within-top-3 predictions
are presented in figures~\ref{fig:main-graphs3}~--~\ref{fig:densenet-calibratedwn3}.

We also provide classification accuracy comparisons for different post-hoc calibration methods against our method if we apply calibration for all top-$1,2,3,\ldots,K$ predictions for $K$-class classification problem in \tabref{tab:acc}. We would like to point out that there is negligible change in accuracy between the calibrated networks (using our method) and the uncalibrated ones.

For the sake of completeness, we present calibration results using the existing calibration metric, Expected Calibration Error (ECE)~\citenew{naeini2015obtaining} in \tabref{tab:res_ECE25}. We would like to reiterate the fact that ECE metric is highly dependent on the chosen number of bins and thus does not really reflect true calibration performance. To reflect the efficacy of our proposed calibration method, we also present calibration results using other calibration metrics such as recently proposed binning free measure KDE-ECE~\citenew{zhang2020mix}, MCE (Maximum Calibration Error)~\citenew{guo2017calibration} and Brier Scores for top-1 predictions on ImageNet dataset in \tabref{tab:res_othermetric_imagenet}. Since, the original formulation of Brier Score for multi-class predictions is highly biased on the accuracy and is approximately similar for all calibration methods, we hereby use top-1 Brier Score which is the mean squared error between top-1 scores and ground truths for the top-1 predictions (1 if the prediction is correct and 0 otherwise). It can be clearly observed that our approach consistently outperforms all the baselines on different calibration measures.

\begin{table}
\scriptsize
\begin{tabular}{ll|cccccc}
\toprule
Dataset                    & Model          & Uncalibrated & Temp. Scaling & Vector Scaling & MS-ODIR & Dir-ODIR & \textbf{Ours (Spline)} \\
\midrule
\multirow{5}{*}{CIFAR-10}  & Resnet-110     & \textbf{93.56}        & \textbf{93.56}         & 93.50          & 93.53   & 93.52    & \underline{93.55}             \\
                           & Resnet-110-SD  & 94.04        & 94.04         & 94.04          & \underline{94.18}   & \textbf{94.20}    & 94.05             \\
                           & DenseNet-40    & 92.42        & 92.42         & \underline{92.50}          & \textbf{92.52}   & 92.47    & 92.31             \\
                           & Wide Resnet-32 & 93.93        & 93.93         & \underline{94.21}          & \textbf{94.22}   & \textbf{94.22}    & 93.76             \\
                           & Lenet-5        & 72.74        & 72.74         & \underline{74.48}          & 74.44   & \textbf{74.52}    & 72.64             \\ \midrule
\multirow{5}{*}{CIFAR-100} & Resnet-110     & 71.48        & 71.48         & \underline{71.58}          & 71.55   & \textbf{71.62}    & 71.50            \\
                           & Resnet-110-SD  & 72.83        & 72.83         & \textbf{73.60}          & \underline{73.53}   & 73.14    & 72.81             \\
                           & DenseNet-40    & 70.00        & 70.00         & 70.13          & \textbf{70.40}   & \underline{70.24}    & 70.17             \\
                           & Wide Resnet-32 & 73.82        & 73.82         & 73.87          & \textbf{74.05}   & \underline{73.99}    & 73.74             \\
                           & Lenet-5        & 33.59        & 33.59         & 36.42          & \textbf{37.58}   & \underline{37.52}    & 33.55             \\ \midrule
\multirow{2}{*}{ImageNet}  & Densenet-161   & 77.05        & 77.05         & 76.72          & \underline{77.15}   & \textbf{77.19}    & 77.05             \\
                           & Resnet-152     & \underline{76.20}        & \underline{76.20}         & 75.87          & 76.12   & \textbf{76.24}    & 76.07             \\ \midrule
SVHN                       & Resnet-152-SD  & 98.15        & 98.15         & 98.13          & 98.12   & \textbf{98.19}    & \underline{98.17}            \\ \bottomrule
\end{tabular}
    \vspace{1ex}

\caption{\small \em Classification (top-1) accuracy (with highest in bold and second highest underlined) post calibration on various image classification datasets and models with different calibration methods. Note, only a negligible change in accuracy is observed in our method compared to the uncalibrated networks.}
\label{tab:acc}
\end{table}
\SKIP{
\begin{table}
\scriptsize
\begin{tabular}{ll|cccccc}
\toprule
Dataset                    & Model          & Uncalibrated & Temp. Scaling & Vector Scaling & MS-ODIR & Dir-ODIR & \textbf{Ours (Spline)} \\
\midrule
\multirow{5}{*}{CIFAR-10}  & Resnet-110     & 0.04750      & 0.01224       & 0.01092        & 0.01276 & 0.01240  & 0.01281             \\
                           & Resnet-110-SD  & 0.04135      & 0.00777       & 0.00752        & 0.00684 & 0.00859  & 0.01076             \\
                           & DenseNet-40    & 0.05507      & 0.01006       & 0.01207        & 0.01250 & 0.01268  & 0.01181             \\
                           & Wide Resnet-32 & 0.04512      & 0.00905       & 0.00852        & 0.00941 & 0.00965  & 0.01189             \\
                           & Lenet-5        & 0.05188      & 0.01999       & 0.01462        & 0.01504 & 0.01300  & 0.01595             \\
\midrule
\multirow{5}{*}{CIFAR-100} & Resnet-110     & 0.18480      & 0.02428       & 0.02722        & 0.03011 & 0.02806  & 0.01964             \\
                           & Resnet-110-SD  & 0.15861      & 0.01335       & 0.02067        & 0.02277 & 0.02046  & 0.01415             \\
                           & DenseNet-40    & 0.21159      & 0.01255       & 0.01598        & 0.02855 & 0.01410  & 0.01828             \\
                           & Wide Resnet-32 & 0.18784      & 0.01667       & 0.01785        & 0.02870 & 0.02128  & 0.02322             \\
                           & Lenet-5        & 0.12117      & 0.01535       & 0.01350        & 0.01696 & 0.02159  & 0.01912             \\
\midrule
\multirow{2}{*}{ImageNet}  & Densenet-161   & 0.05720      & 0.02059       & 0.02637        & 0.04337 & 0.03989  & 0.00898             \\
                           & Resnet-152     & 0.06545      & 0.02166       & 0.02641        & 0.05377 & 0.04556  & 0.01200             \\
\midrule
SVHN                       & Resnet-152-SD  & 0.00877      & 0.00675       & 0.00630        & 0.00646 & 0.00651  & 0.00973            \\
\bottomrule
\end{tabular}
    \vspace{1ex}

\caption{\em Expected Calibration Error (in \%) using 25 bins on various image classification datasets and models with different calibration methods.}
\end{table}
}

\begin{table}
\scriptsize
\begin{tabular}{ll|cccccc}
\toprule
Dataset                    & Model          & Uncalibrated & Temp. Scaling & Vector Scaling & MS-ODIR & Dir-ODIR & \textbf{Ours (Spline)} \\
\midrule
\multirow{5}{*}{CIFAR-10}  & Resnet-110     & 4.750      & 1.224       & \underline{1.092}        & 1.276 & 1.240  & \textbf{1.011}             \\
                           & Resnet-110-SD  & 4.135      & 0.777       & \underline{0.752}        & \textbf{0.684} & 0.859  & 0.992             \\
                           & DenseNet-40    & 5.507      & \textbf{1.006}       & \underline{1.207}        & 1.250 & 1.268  & 1.389             \\
                           & Wide Resnet-32 & 4.512      & \underline{0.905}       & \textbf{0.852}        & 0.941 & 0.965  & 1.003             \\
                           & Lenet-5        & 5.188      & 1.999       & 1.462        & 1.504 & \textbf{1.300}  & \underline{1.333}             \\
\midrule
\multirow{5}{*}{CIFAR-100} & Resnet-110     & 18.480      & \underline{2.428}       & 2.722        & 3.011 & 2.806  & \textbf{1.868}             \\
                           & Resnet-110-SD  & 15.861      & \textbf{1.335}       & 2.067        & 2.277 & 2.046  & \underline{1.766}             \\
                           & DenseNet-40    & 21.159      & \textbf{1.255}       & 1.598        & 2.855 & \underline{1.410}  & 2.114             \\
                           & Wide Resnet-32 & 18.784      & \textbf{1.667}       & 1.785        & 2.870 & 2.128  & \underline{1.672}             \\
                           & Lenet-5        & 12.117      & 1.535       & \underline{1.350}        & 1.696 & 2.159  & \textbf{1.029}             \\
\midrule
\multirow{2}{*}{ImageNet}  & Densenet-161   & 5.720      & \underline{2.059}       & 2.637        & 4.337 & 3.989  & \textbf{0.798}             \\
                           & Resnet-152     & 6.545      & \underline{2.166}       & 2.641        & 5.377 & 4.556  & \textbf{0.913}             \\
\midrule
SVHN                       & Resnet-152-SD  & 0.877      & 0.675       & \textbf{0.630}        & \underline{0.646} & 0.651  & 0.832            \\
\bottomrule
\end{tabular}
    \vspace{1ex}

\caption{\small \em ECE for top-1 predictions (in \%) using 25 bins (with lowest in bold and second lowest underlined) on various image classification datasets and models with different calibration methods. Note, for this experiment we use 13 knots for spline fitting.}
    \label{tab:res_ECE25}  

\end{table}
\begin{table}[ht]
\centering
\scriptsize
\begin{tabular}{cl|ccccc}
\toprule
Calibration Metric     & Model          & Uncalibrated & Temp. Scaling & MS-ODIR & Dir-ODIR & \textbf{Ours (Spline)} \\
\midrule
\multirow{2}{*}{\textbf{KDE-ECE}}  & Densenet-161 &  0.03786 & \underline{0.01501} & 0.02874	& 0.02979	& \textbf{0.00637}             \\
& Resnet-152     & 	0.04650	& \underline{0.01864} & 0.03448 &	0.03488	& \textbf{0.00847}           \\
\midrule
\multirow{2}{*}{\textbf{MCE}}  & Densenet-161   & 0.13123 &	\textbf{0.05442} & 0.09077	& 0.09653	&  \underline{0.06289}             \\
& Resnet-152     & 0.15930	& \underline{0.09051} & 0.11201	& 0.09868	& \textbf{0.04950} \\    
\midrule
\multirow{2}{*}{\textbf{Brier Score}}  & Densenet-161   & 0.12172 &	\underline{0.11852} &	0.11982	& 0.11978	& \textbf{0.11734}             \\
& Resnet-152     & 0.12626 &	\underline{0.12145} &	0.12406 &	0.12308 &	\textbf{0.12034}          \\
\bottomrule
\end{tabular}
    \vspace{1ex}

\caption{\small \em Calibration Error using other different metrics such as binning-free KDE-ECE~\citenew{zhang2020mix}, MCE (Maximum Calibration Error)~\citenew{guo2017calibration} and Brier Score for top-1 predictions (with lowest in bold and and second lowest underlined) on ImageNet dataset with different calibration methods. Note, for this experiment we use 6 knots for spline fitting.}
    \label{tab:res_othermetric_imagenet}  

\end{table}

\else
\begin{abstract}
  Calibrating neural networks is of utmost importance when employing them in safety-critical applications where the downstream decision making depends on the predicted probabilities.
  Measuring calibration error amounts to comparing two empirical distributions.
  In this work, we introduce a {\em binning-free calibration measure} inspired by the classical Kolmogorov-Smirnov (KS) statistical test in which the main idea is to compare the respective cumulative probability distributions.
  From this, by approximating the empirical cumulative distribution using a differentiable function via splines, we obtain a {\em recalibration function}, which maps the network outputs to actual (calibrated) class assignment probabilities. 
  The spline-fitting is performed using a held-out calibration set and the obtained recalibration function is evaluated on an unseen test set.
  We tested our method against existing calibration approaches on various image classification datasets and our spline-based recalibration approach consistently outperforms existing methods on KS error as well as other commonly used calibration measures.
\end{abstract}
\section{Introduction}
Despite the success of modern neural networks they are shown to be poorly
calibrated~\citenew{guo2017calibration}, which has led to a growing interest in the
calibration of neural networks over the past few
years~\citenew{kull2019beyond,kumar2019verified,kumar2018trainable,muller2019does}.
Considering classification problems, a classifier is said to be {\em calibrated}
if the probability values it associates with the class labels match the true
probabilities of correct class assignments.
For instance, if an image classifier outputs 0.2 probability for the ``horse''
label for 100 test images, then out of those 100 images approximately 20 images
should be classified as horse.
It is important to ensure calibration when using classifiers for
safety-critical applications such as medical image analysis and autonomous
driving where the downstream decision making depends on the predicted
probabilities.

One of the important aspects of machine learning research is the measure used to
evaluate the performance of a model and in the context of calibration, this
amounts to measuring the difference between two empirical probability
distributions.
To this end, the popular metric, Expected Calibration Error
(ECE)~\citenew{naeini2015obtaining}, approximates the classwise probability
distributions using histograms and takes an expected difference.
This histogram approximation has a weakness that the resulting calibration error
depends on the binning scheme (number of bins and bin divisions).
Even though the drawbacks of ECE have been pointed out and some improvements have
been proposed~\citenew{kumar2019verified,nixon2019measuring}, the histogram approximation has not been eliminated.%
\footnote{We 
consider metrics that measure classwise (top-$r$) calibration
error~\citenew{kull2019beyond}. Refer to \sect{prelim} 
for details.}

In this paper, we first introduce a simple, {\em binning-free calibration
measure} inspired by the classical Kolmogorov-Smirnov (KS) statistical
test~\citenew{kolmogorov1933sulla,smirnov1939estimation}, 
which also provides an effective visualization of
the degree of miscalibration similar to the reliability
diagram~\citenew{niculescu2005predicting}.
To this end, the main idea of the KS-test is to compare the respective classwise
cumulative (empirical) distributions.
Furthermore, by approximating the empirical cumulative distribution using a
differentiable function via splines~\citenew{Mckinley_cubicspline}, we obtain an {\em
analytical recalibration function}\footnote{Open-source implementation available at \url{https://github.com/kartikgupta-at-anu/spline-calibration}} which maps the given network outputs to the
actual class assignment probabilities.
Such a direct mapping was previously unavailable and the problem has been
approached indirectly via learning, for example, by optimizing the
(modified) cross-entropy
loss~\citenew{guo2017calibration,mukhoti2020calibrating,muller2019does}.
Similar to the existing methods~\citenew{guo2017calibration,kull2019beyond} the
spline-fitting is performed using a held-out calibration set and the obtained
recalibration function is evaluated on an unseen test set.

We evaluated our method against existing calibration approaches on various image
classification datasets and our spline-based recalibration approach consistently
outperforms existing methods on KS error, ECE as well as other commonly used
calibration measures.
Our approach to calibration does not update the
model parameters, which allows it to be applied on any trained network and it
retains the original classification accuracy in all the tested cases.


\section{Notation and Preliminaries}\label{sec:prelim}
We abstract the network as a function $f_\theta: \calD \rightarrow  \out^K$,
where $\calD\subset\R^d$,
and write $f_\theta(\v x) = \v z$.  Here, $\v x$ may be an image,
or other input datum, and $\v z$ is a vector, sometimes known as the
vector of {\em logits}.   In this paper, the parameters $\theta$ 
will not be considered,
and we write simply $f$ to represent the network function.
We often refer to this function as a {\em classifier}, and in theory this could be of some other type than a neural network.

In a classification problem, $K$ is the number of 
classes to be distinguished, and we call the value $z_k$ (the $k$-th
component
of vector $\v z$) the 
{\em score} for the class $k$.  If the final layer of a network is a
{\em softmax}
layer, then the values $z_k$ satisfy $\sum_{k=1}^K z_k = 1$, and $z_k \ge 0$.
Hence, the $z_k$ are pseudo-probabilities, though they do not necessarily
have anything to do with real probabilities of correct class assignments. 
Typically,
the value $y^* = \arg\max_k z_k$ is taken as the ({\em top-$1$}) prediction of the
network,
and the corresponding score, $\max_k z_k$ is called the {\em confidence} of the
prediction.  However,
the term confidence does not have any mathematical meaning
in this context
and we deprecate its use.

We assume we are given a set of training data $(\v x_i, y_i)_{i=1}^n$,
where $\v x_i \in \calD$ is an input data element, which for simplicity we call an
image,  
and $y_i \in \calK = \{1, \ldots, K\}$ is
the so-called ground-truth label.%
\SKIP{
\footnote{
It is assumed here that each sampled image has a single 
supplied ground-truth label, though the same image
may potentially be supplied with a different label in a different 
sample.  The problem of
calibrating a network in which each input image may have multiple labels
is interesting, and a subject for further analysis.
}
} 
Our method also uses two other sets of data, called {\em calibration data} and
{\em test data}.

It would be desirable if the numbers $z_k$ output by a network represented true
probabilities.
For this to make sense, we posit the existence of joint random variables $(X,
Y)$,
where $X$ takes values in a domain $\calD \subset \R^d$, and $Y$ takes values
in
$\calK$.  Further, let $Z = f(X)$, another random variable, and 
$Z_k =f_k(X)$ be its
$k$-th component.
Note that in this formulation $X$ and $Y$ are joint random variables, and
the probability $P(Y ~|~ X)$ is {\em not} assumed to be $1$ for single class,
and $0$ for the others. 

A network is said to be {\em calibrated} if for every class $k$, 
\begin{equation}
\label{eq:calibration-definition}
P(Y = k ~|~ Z = \v z) =z_k ~.
\end{equation}
This can be written briefly as $P(k ~|~ f(\v x)) = f_k(\v x) = z_k$.
Thus, if the network
takes input $\v x$ and outputs $\v z = f(\v x)$, then $z_k$ represents
the
probability (given $f(\v x)$) that image $\v x$ belongs to class
$k$.

The probability $P(k ~|~ \v z)$ is difficult to evaluate, even empirically,
and most metrics (such as ECE) use or measure a different notion  
called {\em classwise calibration}~\citenew{kull2019beyond, zadrozny2002transforming}, defined as,
\begin{equation}
\label{eq:classwise-calibration-definition}
P(Y = k ~|~ Z_k = z_k) =z_k ~.
\end{equation}
This paper uses this definition \eq{classwise-calibration-definition} of calibration in the proposed KS metric.

\SKIP{
For instance, given a bunch of meteorological readings $\v x$, a network
may make 
a prediction that it will rain tomorrow, so there is no unique answer.  The
network can have two outputs, ($1$ = ``rain'',
$2$ = ``no rain'').  Then it is desirable that the numbers $\v z = f(\v x)$
produced
by the network satisfy
\newcommand{\rain}{{\text{``rain''}}}
\[
z_\rain = P(\rain ~|~ f(\v x)) ~,
\]
for then one may use the output $f(\v x)$ of the network as a rational basis for
a decision to
carry an umbrella or not.
}
Calibration and accuracy of a network are different concepts.  For instance,
one may consider a classifier that simply outputs the class probabilities for
the data, ignoring the input $\v x$.  Thus, if $f_k(\v x) = z_k = P(Y = k)$,
this classifier $f$ is calibrated but the accuracy is no better than the random predictor.  Therefore, in calibration of a classifier, it is important that
this is not done while sacrificing classification (for instance top-$1$) accuracy.

\SKIP{ 
\paragraph{Perfect prediction. }
\rih{This can go if we need space}
The definition of calibration requires that $z_k = P(Y = k ~|~ Z = \v z)$ 
(Eq.~\eqref{eq:calibration-definition}), 
where $\v z$ is the vector of class probabilities, conditional
on the input data $\v x$.
However, one may ask why one does not require that
\begin{equation}
    z_k = f_k(\v x) = P(Y = k ~|~ X = \v x) ~.
\end{equation}
This would indeed be desirable, but
in this case, it is more than can be reasonably attained.  With
this definition, only a perfect network can be calibrated, one that
can accurately output the class probabilities for every input
datum.  For many applications (weather prediction, medical
diagnosis or prognosis prediction for instance) this goal is 
probably unattainable.   

Typically $f$ (represented by the network) is a many-to-one mapping and the input
dimension is much larger than the number of classes.
Thus, we require less. 
It is sufficient that when a network
says ``this image is a horse'' with score $0.75$,  then
$75\%$ of the time,  the image is, in fact, a horse.
This is what the definition of calibration seeks to express.
} 

\paragraph{The top-$r$ prediction. }

The classifier $f$ being calibrated means that 
$f_k(\v x)$ is calibrated for each class $k$, not only for the top class.  
This means that scores
$z_k$ for all classes $k$ give a meaningful estimate of
the probability of the sample belonging to class $k$.  
This is particularly important in medical diagnosis where one
may wish to have a reliable estimate of the probability
of certain unlikely diagnoses.  

Frequently, however, one is most interested in 
the probability of the top scoring class, the top-$1$ 
prediction, or in general the top-$r$ prediction.
%
%
Suppose a classifier $f$ is given with values in $\out^K$
and let $y$ be the ground truth label.
Let us use $f^{(-r)}$ to denote the $r$-th top score (so $f^{(-1)}$ 
would denote the top score; the notation follows python semantics in which 
$A[-1]$ represents the last element in array $A$).
Similarly we define $\max^{(-r)}$ for the $r$-th largest value.
Let $f^{(-r)}: \calD \rightarrow \out$ be defined as
\begin{equation}\label{eq:top-1-pred}
f^{(-r)} (\v x) = \text{max}^{(-r)}_k f_k(\v x) ~,\quad\text{and}\quad
y^{(-r)} = 
 \begin{cases} 
      1 & \mbox{ if }  y = \arg\max_k^{(-r)} f_k(\v x)  \\
      0 & \mbox{ otherwise ~}. 
   \end{cases}
\end{equation}
In words, $y^{(-r)}$ is $1$ if the $r$-th top predicted class is the correct
(ground-truth) choice.
The network is calibrated for the top-$r$ predictor if for all scores $\sigma$,
\begin{equation}
\label{eq:top-r-calibrated}
P(y^{(-r)} = 1 ~|~ f^{(-r)}(\v x) = \sigma) = \sigma  ~.
\end{equation}
In words, the conditional probability that the top-$r$-th choice of the network is
the correct choice, is equal
to the $r$-th top score.  

Similarly, one may consider probabilities that a datum belongs to one of the top-$r$ scoring classes.
The classifier is calibrated for being within-the-top-$r$ classes if
\begin{equation}
\label{eq:within-top-r-calibrated}
P\big(\textstyle\sum_{s=1}^r y^{(-s)} = 1 ~\big|~ \sum_{s=1}^r f^{(-s)}(\v x) = \sigma\big) = \sigma  ~.
\end{equation}
Here, the sum on the left is $1$ if the ground-truth label is among the
top $r$ choices, $0$ otherwise, and the sum on the right is the sum
of the top $r$ scores.
 


\section{Kolmogorov-Smirnov Calibration Error}
We now consider a way to measure if a classifier is classwise calibrated, including
top-$r$ and within-top-$r$ calibration.
This test
is closely related to the Kolmogorov-Smirnov test~\citenew{kolmogorov1933sulla,smirnov1939estimation} for the
equality of two
probability distributions.  This may be applied when the probability
distributions are represented by samples.

We start with the definition of classwise calibration:
\begin{align}
P(Y = k ~|~ f_k(X) = z_k) &= z_k ~.\\\nonumber
P(Y = k, ~f_k(X) = z_k) &= z_k \,  P(f_k(X) = z_k)~, \quad\mbox{Bayes' rule}~.
\end{align}
This may be written more simply but with a less precise notation as
%
\[P(z_k, ~k) = z_k \,  P(z_k) ~.\]

\SKIP{
\rih{Something funny here. }The left-hand side of this equation, $P(z_k ~|~ k) \, P(k)$,  
is the distribution of scores $z_k = f_k(\v x)$ normalized by the
class probability
for class $k$. This may be thought of as a continuous histogram
of scores for class $k$.   The expression $P(z_k)$ on the
right is the distribution of scores $z_k = f_k(\v x)$ for all classes, which is
multiplied
by $z_k$.
This can also be written as
\begin{equation}
\label{eq:KS-distributions-2}
P(z_k ~|~ k) = z_k \,  P(z_k) / P(k) ~,
\end{equation}
where now, the left hand side is a probability distribution.
}

\paragraph{Motivation of the KS test. }
One is motivated to test the equality (or difference between) two
distributions,
defined on the interval $[0,1]$.  However, instead of having a functional form
of
these distributions, one has only samples from them.  Given samples $(\v x_i,
y_i)$,
it is not straight-forward to estimate $P(z_k)$ or $P(z_k ~|~ k)$, since a
given
value $z_k$ is likely to occur only once, or not at all, since the
sample set is finite.  One possibility is to use histograms of these
distributions.
However, this requires selection of the bin size and the division between
bins, and the result depends on these parameters.  
For this reason, we believe this is an inadequate solution.

The approach suggested by the Kolmogorov-Smirnov test is to compare the cumulative distributions. 
Thus, with $k$ given, one tests the
equality
\begin{equation}
\label{eq:KS-distributions}
\int_0^\sigma P( z_k, k) \, dz_k = \int_0^\sigma z_k \,  P(z_k)\, dz_k ~.
\end{equation}
Writing $\phi_1(\sigma)$ and $\phi_2(\sigma)$ to be the two sides of this
equation, the KS-distance between these two distributions is defined as $\text{KS} = \max_\sigma
|\phi_1(\sigma) - \phi_2(\sigma)|$.
The fact that simply the maximum is used here may suggest a lack of robustness,
but this is a maximum difference between two integrals, so it reflects
an accumulated difference between the two distributions.  

To provide more insights into the KS-distance, let us a consider a case where  
$z_k$ consistently over or under-estimates $P(k ~|~ z_k)$  (which is
usually the case, at least for top-$1$ classification~\citenew{guo2017calibration}), then
$P(k ~|~ z_k) - z_k$ has constant sign for all values of $z_k$.
It follows that $P(z_k, k) - z_k P(z_k)$ has constant sign
and so the maximum value in the KS-distance is achieved when $\sigma = 1$.
In this case, 
\begin{align}
\begin{split}
\text{KS} &= \int_0^1 \big| P(z_k, k) - z_k P(z_k) \big| \, dz_k 
                = \int_0^1 \big|P(k ~|~ z_k ) - z_k  \big| \,P(z_k) \, dz_k~,
\end{split}
\end{align}
which is the expected difference between $z_k$ and $P(k ~|~ z_k)$.
This can be equivalently referred to as the {\em expected calibration error} 
for the class $k$. 

\paragraph{Sampled distributions. }
Given samples $(\v x_i, y_i)_{i=1}^N$, and a fixed $k$, one can estimate these cumulative
distributions by
\begin{equation}
\int_0^\sigma P( z_k, k)\, dz_k \approx \frac{1}{N}  \sum_{i=1}^N \v 1(f_k(\v x_i)
\le \sigma) \times \v 1(y_i = k) ~,
\end{equation}
where $\v 1: {\cal B} \rightarrow \{0, 1\}$ is the function that returns $1$ if
the Boolean
expression is true and otherwise $0$.  Thus, the sum is
simply a count of the number of samples for which $y_i = k$ and $f_k(\v x_i) \le \sigma$,
and so the integral represents the proportion of the data 
satisfying this condition.
Similarly, 
\begin{equation}
 \int_0^\sigma z_k \,  P(z_k)\, dz_k  \approx \frac{1}{N} \,\sum_{i=1}^N \v
1(f_k(\v x_i) \le \sigma) f_k(\v x_i) ~.
\end{equation}
These sums can be computed quickly by sorting the data according to the values
$f_k(\v x_i)$, then defining two sequences as follows.
\begin{align}
\begin{split}
\label{eq:cumsum}
\tilde{h}_0 = h_0 &= 0~, \\
h_i &= h_{i-1} + \v 1(y_i = k) /N~, \\
\tilde{h}_i &= \tilde{h}_{i-1} + f_k(\v x_i) /N~.
\end{split}
\end{align}
The two sequences should be the same, and the metric
\begin{equation}
\text{KS}(f_k) = \max_i \,  |h_i - \tilde{h}_i|~,
\end{equation}
gives a numerical estimate of the similarity, and hence a measure of the degree
of calibration of
$f_k$.  This is essentially a version of the Kolmogorov-Smirnov test for
equality of two distributions.

\paragraph{Remark.} 
All this discussion holds also when $k < 0$, for top-$r$ and within-top-$r$ predictions as discussed in \sect{prelim}.   In
\eq{cumsum},  for instance, $f_{-1}(\v x_i)$ 
means the top score, $f_{-1}(\v x_i) = \max_k (f_k(\v x_i))$, or more generally,
$f_{-r}(\v x_i)$ means the $r$-th top score.  Similarly,
the expression $y_i = -r$ means that $y_i$ is the class that has the
$r$-th top score.
Note when calibrating the top-$1$ score, our method is applied after identifying the top-$1$ score, hence, it does not alter the classification accuracy.

\begin{figure}
\centerline{
\includegraphics[width=\textwidth]{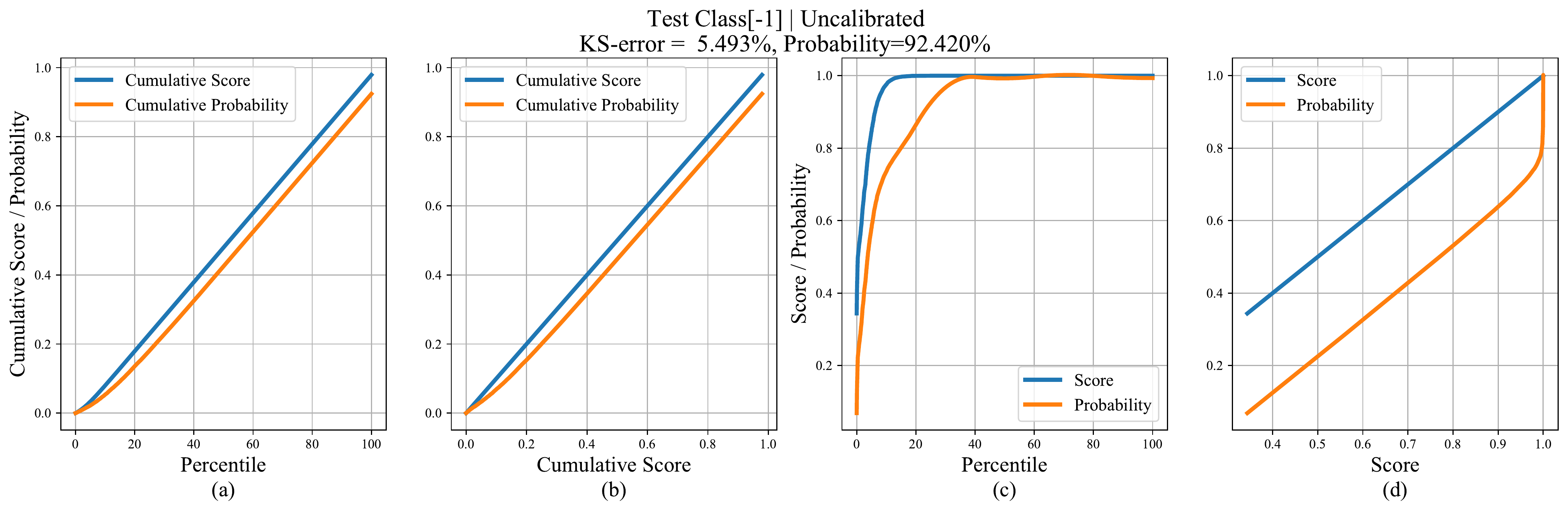}
}
\vspace{-2ex}
\caption{\small \em Calibration graphs for an uncalibrated DenseNet-40
\citenew{huang2017densely} trained on CIFAR-10 for top-1 class with a KS error of $5.5$\%, and top-$1$ accuracy of $92.4$\%  on the
test set. Here (\textbf{a}) shows the plot of cumulative score and probability versus the
fractile of the test set, (\textbf{b}) shows the same information with the
horizontal axis warped so that the cumulative-score graph is a straight line.  
This is created as scatter plots of cumulative (score, score): blue and (score, probability): orange. 
If the network is perfectly calibrated, the probability line will be a straight line
coincident with the (score, score) line. This shows that the network is substantially
overestimating (score) the probability of the computation. (\textbf{c}) and (\textbf{d}) show plots of (non-cumulative) score and probability plotted against fractile, or score.  How these plots are produced is described in \sect{recal-spline}.
}
\label{fig:main-graphs}
\vspace{-2ex}
\end{figure}


\SKIP{
\begin{figure}
\centerline{
\includegraphics[height=3.5in]{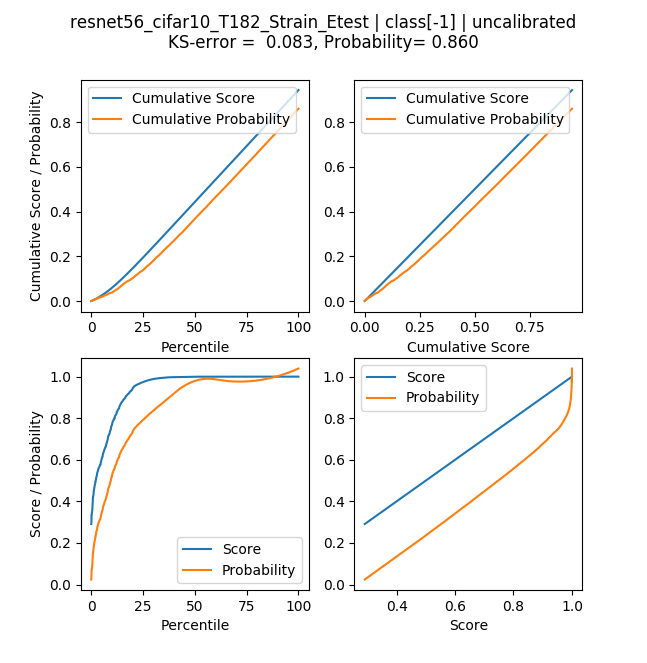} 
}
\caption{\small \aj{better to put them in single row to save space}
\em Calibration graphs for an uncalibrated network, in this case resnet56 
with cifar10 dataset.  The title of the graph shows (reading left to right) that
it is resnet56,
with cifar10 dataset, trained for 182 epochs, using the set ``train'', evaluated
on set ``test'' ($5,000$ images),
reporting results for class $-1$ (the top result), and uncalibrated. 
It has a KS error of $0.083$, and is achieving $0.860$ top-$1$ accuracy on the
test set.
\vspace*{0.05in}
\newline
The top left graph shows the plot of cumulative score and probability versus the
fractile of 
the test set.  The top right graph shows the same information with the
horizontal axis warped
so that the cumulative-score graph is a straight line.  This is created as
scatter plots of (cumulative)
(score, score) and (score, probability).  If the network is perfectly
calibrated, the probability line 
will be a straight line coincident with the (score,score) line.
This shows that the
network is substantially overestimating (score) the probability of the
computation.
\vspace*{0.05in}
\newline
The bottom graphs show plots of (non-cumulative) score and probability plotted
agains
fractile, or score.  How these plots are produced will be described in the
text.
}
\label{fig:main-graphs}
\end{figure}
}

\SKIP{
\begin{figure}
\centerline{
\includegraphics[height=3.5in]{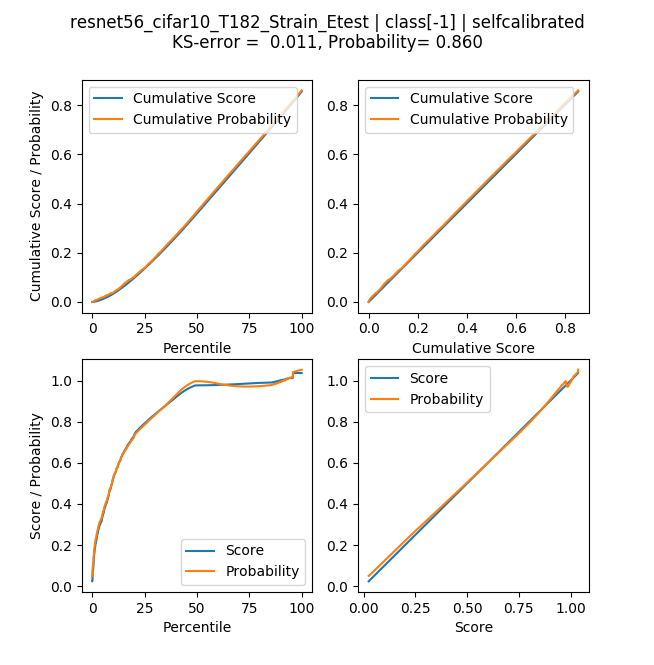} 
}
\caption{
\em This shows the result of the spline calibration method, on the
example given in \fig{main-graphs}.  A recalibration function $\gamma:
\R\rightarrow\R$
is used to adjust the scores, replacing $f_k(x)$ with $\gamma(f_k(x))$.  
As is seen, the network is now almost
perfectly calibrated when tested on the set ``calib'' that was used to
calibrate
it.  
}
\label{fig:resnet-self-calibrated}
\end{figure}
}
\SKIP{ 
\begin{figure}[t]
\centerline{
\includegraphics[width=\textwidth]{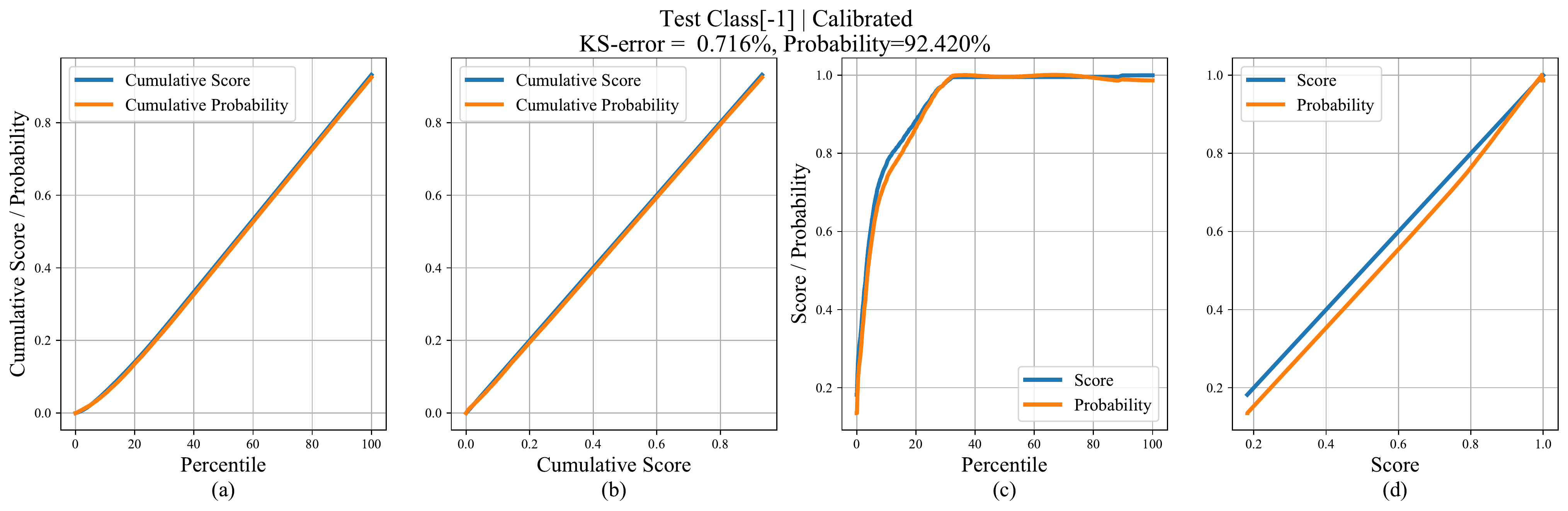} 
}
\caption{
\em Results for the example in \fig{main-graphs} with scores recalibrated
using a recalibration function $\gamma$ computed using set ``calib'' 
and then tested on a further set ``test''.  It is seen that the result is not perfect, but much better than the original results in \fig{main-graphs}. It is also notable that the improvement in calibration is achieved without any loss of accuracy.
}
\label{fig:resnet-calibrated}
\end{figure}
} 

\SKIP{
\begin{figure}
\centerline{
\includegraphics[height=3.65in]{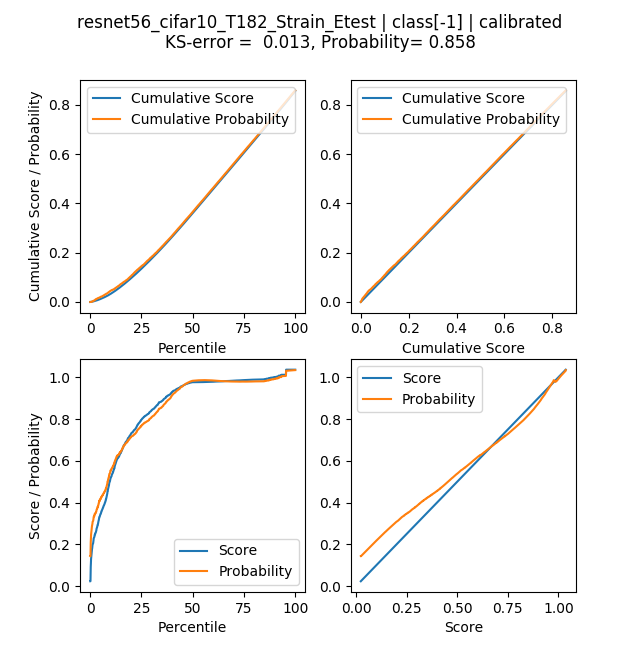} 
}
\caption{
\em Results for the example in \fig{main-graphs} with scores recalibrated
using a recalibration function $\gamma$ computed using set ``calib'' 
and then tested on a further set ``test'', with $5000$ 
samples.  It is seen that the result is not perfect, but much better than the
original results in \fig{main-graphs}.  In the bottom-right graph, the
probability
curve lies appreciably above the score, for values of score up to $0.5$.
However, as seen in the bottom-right graph, this represents less
than $10\%$ of samples.
It is also notable that the improvement in calibration is achieved without
any appreciable loss of accuracy ($0.860$ in the calibrated case, $0.858$
after calibration).  This small difference is explained by variability in
the test set.
}
\label{fig:resnet-calibrated}
\end{figure}
}

\section{Recalibration using Splines}\label{sec:recal-spline}
The function $h_i$ defined in \eq{cumsum} computes an empirical approximation
\begin{equation}
h_i \approx P(Y = k, f_k(X) \le f_k(\v x_i)) ~.
\end{equation}
For convenience, the value of $f_k$ will be referred to as the {\em score}.
We now define a continuous function $h(t)$ for $t \in [0,1]$ by
\begin{equation}\label{eq:ht}
h(t) = P(Y = k, f_k(X) \le s(t)  ) ~,
\end{equation}
where $s(t)$ is the $t$-th fractile score, namely the value that
a proportion $t$ of the scores $f_k(X)$ lie below.
For instance $s(0.5)$ is the median score.
So, $h_i$ is an empirical approximation to $h(t)$ where $t = i/N$. 
We now provide the basic observation that allows us to compute 
probabilities given the scores.

\begin{proposition}\label{pro:spline-deriv}
If $h(t) = P(Y = k, f_k(X) \le s(t))$ as in \eq{ht} where $s(t)$ 
is the $t$-th fractile score, then 
$h'(t) = P(Y = k ~|~ f_k(X) = s(t))$, where $h'(t) = dh/dt$.
\end{proposition}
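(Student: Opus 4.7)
My plan is to differentiate $h(t)$ with respect to $t$ using the chain rule and then exploit the defining property of the fractile function $s$, namely that $P(f_k(X) \le s(t)) = t$ by construction. The entire argument is just an application of the Bayes rule together with the observation that the transformation $s(\cdot)$ is precisely the inverse of the marginal CDF of $f_k(X)$, so the Jacobian factor $s'(t)$ cancels against the density of $f_k(X)$ at $s(t)$.

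Concretely, I would proceed in four short steps. First, differentiate $h(t) = P(Y=k,\, f_k(X) \le s(t))$ in $t$; since only the upper limit of the ``score'' event depends on $t$, the chain rule yields $h'(t) = P(Y=k,\, f_k(X) = s(t)) \cdot s'(t)$, interpreting the first factor as the joint density of $(Y, f_k(X))$ evaluated at the point $(k, s(t))$ and summed/integrated against nothing else. Second, factor this joint density via Bayes, writing
\begin{equation*}
P(Y=k,\, f_k(X) = s(t)) \;=\; P(Y=k \mid f_k(X)=s(t)) \cdot P(f_k(X) = s(t)).
\end{equation*}
Third, recognize that $P(f_k(X) = s(t)) \cdot s'(t) = \frac{d}{dt} P(f_k(X) \le s(t))$, which, by definition of $s$ as the fractile function, equals $\frac{d}{dt}(t) = 1$. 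Combining these gives $h'(t) = P(Y=k \mid f_k(X) = s(t))$, as claimed.

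The only conceptual subtlety worth flagging is the step $P(f_k(X) \le s(t)) = t$: this is really the defining equation for $s(t)$ (the quantile/fractile), and it implicitly requires that the marginal distribution of $f_k(X)$ be continuous so that the quantile is well-defined and the CDF is actually equal to $t$ at $s(t)$ (rather than merely straddling it). In the discrete empirical setting one would either pass to a continuous approximation (which is exactly what the spline fit provides downstream) or interpret the derivative in a distributional sense. Apart from this, no step is hard — it is just careful bookkeeping of densities and the chain rule, so the proof should fit in a few lines.
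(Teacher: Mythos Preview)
Your proof is correct and follows essentially the same route as the paper's own argument: chain rule on $h(t)$, Bayes factorization of the joint, and then the key identity $P(f_k(X)\le s(t))=t$ to collapse the factor $P(f_k(X)=s(t))\cdot s'(t)$ to $1$. Your remark on the continuity assumption for the fractile to satisfy $P(f_k(X)\le s(t))=t$ exactly is a welcome clarification that the paper leaves implicit.
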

\begin{proof}
The proof relies on the equality $P(f_k(X) \le s(t)) = t$.
In words,
since $s(t)$ is the 
value that a fraction $t$ of the scores are less than or equal, 
the probability that a score is less than or equal to $s(t)$, 
is (obviously) equal to $t$.
See the supplementary material for a detailed proof.
\end{proof}
Notice $h'(t)$ allows direct conversion from score to probability.
Therefore, our idea is to approximate $h_i$ using a differentiable function and take the derivative which would be our recalibration function.

\SKIP{
\begin{proposition}\aj{don't seem mathematical, may be we'll simply put in text?}
\label{pro:spline-deriv}
If the function $h$ (see \eq{cumsum}) is plotted against the score fractile (top
left 
graph in \fig{main-graphs}, orange line), then the derivative $h'(t)$ (bottom
left graph)
gives the probability $P(k ~|~ f_k(X) = s(t))$.  If probability is plotted
directly
against score, (bottom right graph), then this allows a direct conversion from
score to probability.
\end{proposition}
}
%

\subsection{Spline fitting}
\vspace{-1ex}

The function $h_i$ (shown in \fig{main-graphs}a) is obtained
through sampling
only.  Nevertheless, the sampled graph is smooth and increasing.  There are
various ways to fit a smooth curve to it, so as to take derivatives.  We choose to fit the sampled points $h_i$ to a cubic spline and take its derivative.

Given sample points $(u_i, v_i)_{i=1}^N$ in $\R\times \R$, easily available
references 
show how to fit a smooth spline curve
that passes directly through the points $(u_i, v_i)$. 
A very clear description is given in~\cite{Mckinley_cubicspline}, 
for the case where the points
$u_i$ are equally
spaced.   We wish, however, to fit a spline curve with a small number of knot
points to do a least-squares fit to the points.  
For convenience, this is briefly described here.

A cubic spline $v(u)$ is defined by its values at 
certain knot points $(\hat u_k,  \hat v_k)_{k=1}^K$.
In fact, the value of the curve at any point $u$ can be written as a linear
function
$v(u) = \sum_{k=1}^K a_k(u)\hat v_k = \v a\tr\!(u)\, \hat{\v v}$, 
where the coefficients
$a_k$ depend on $u$.%
\footnote{
Here and elsewhere, notation such as $\v v$ and
$\v a$ denotes the vector of values $v_i$ or $a_k$, as appropriate.
}
Therefore, given a set of further points
$(u_i, v_i)_{i=1}^N$, which may be different from the knot points, and typically
more in number, least-squares spline fitting of the points $(u_i, v_i)$
can be written as a least-squares problem 
$
\min_{\hat{\v v}} \| \m A(\v u)\hat{\v v} - \v v \|^2
$,
which is solved by standard linear least-squares techniques.  Here, the matrix
$\m A$ has dimension $N \times K$ with $N > K$.
Once $\hat{\v v}$ is found, the value of the spline at any further points
$u$ is equal to $v(u) = \v a(u)\tr \hat{\v v}$, a linear combination of the knot-point
values $\hat v_k$.

Since the function is piecewise cubic, with continuous second derivatives,
the first derivative of the spline is computed analytically.
Furthermore, the derivative $v'(u)$ can also be written as a linear  combination
$v'(u) = \v a'(u)\tr \hat{\v v}$, where the coefficients $\v a'(u)$ can be
written explicitly.

Our goal is to fit a spline to a set of data points $(u_i, v_i) = (i/N, h_i)$ 
defined in \eq{cumsum}, in other words, the values $h_i$ plotted
against fractile score.   Then according to \pro{spline-deriv},
the derivative of the spline is equal to $P(k ~|~ f_k(X) = s(t))$.
This allows a direct computation of the conditional probability that
the sample belongs to class $k$.  

Since the derivative of $h_i$ is a probability, one might
constrain the derivative to be in the range $[0,1]$ while fitting splines.
This can be easily incorporated because the derivative of the spline is a 
linear expression in $\hat v_i$.
The spline fitting problem thereby
becomes a linearly-constrained quadratic program (QP).
However, although we tested this, in all the reported experiments, a simple least-squares solver is used without the
constraints.

\SKIP{
\rih{The next two paragraphs could go, since Kartik
says that they do not improve results, and so we quote results
for spline fitting without the linear constraint.  However, keep if there is room. }
In fitting the spline, 
there are two requirements: 1) 
the derivative should lie in the range $[0, 1]$ (since it is a probability), 
and 2) it should be monotonically increasing (probability of 
belonging to class $k$ should
increase with the score).  
This latter condition is an empirical requirement,
and could be omitted.

It is enough to enforce these conditions at the knot points, or alternatively
at every point $u_i$.  Since the derivative (and also the second derivative)
of the spline is a linear expression in the $\hat v_i$, this leads to linear
constraints on the optimization problem.  Therefore, the spline fitting
problem becomes a linearly-constrained quadratic program. 
We use the qp solver from \verb|cvxopt|~\cite{andersen2013cvxopt} for optimization.
}

\begin{figure}
\centerline{
\includegraphics[width=\textwidth]{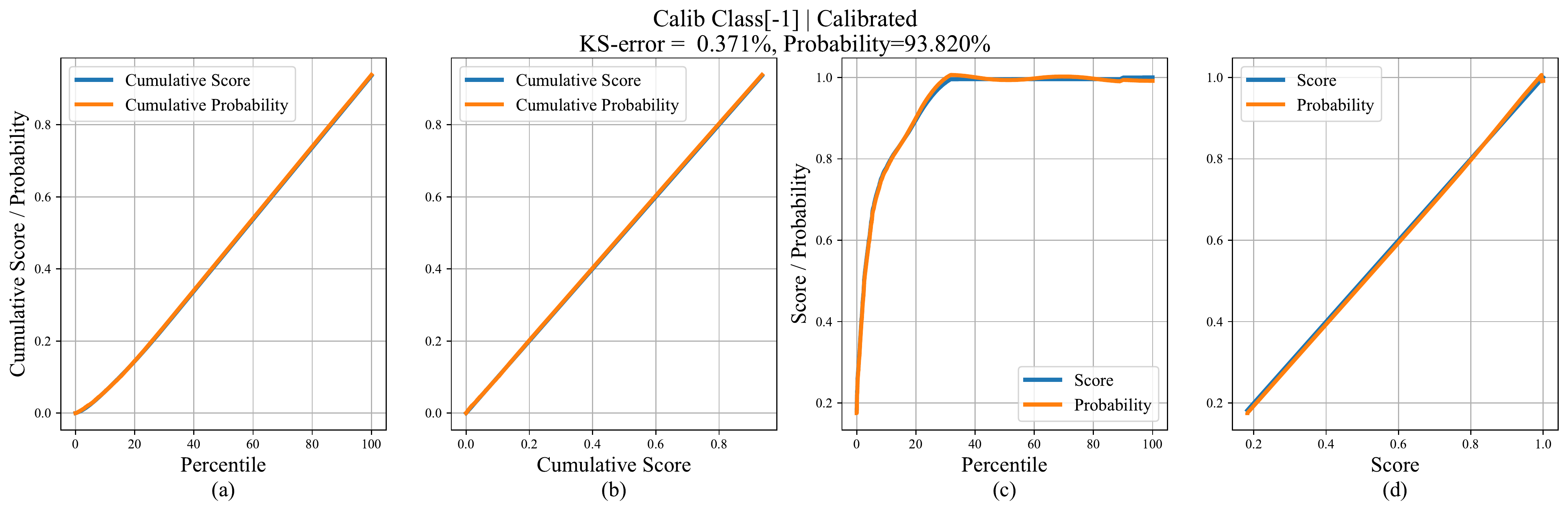} 
}
\vspace{-0.8ex}
\centerline{
\includegraphics[width=\textwidth]{figures/cifa10-densenet40/spline_calibrated_test_class-1_KS.pdf} 
}
\vspace{-2ex}
\caption{
\em The result of the spline calibration method, on the
example given in \fig{main-graphs} for top-$1$ calibration. 
A recalibration function $\gamma:
\R\rightarrow\R$
is used to adjust the scores, replacing $f_k(\v x)$ with $\gamma(f_k(\v x))$ 
(see \sect{recal}).  
As is seen, the network is now almost
perfectly calibrated when tested on the ``calibration'' set (\textbf{top row}) 
used to calibrate it. 
In \textbf{bottom row}, the recalibration function is tested
on a further set ``test''.  It is seen that the result is not perfect, but much better than the one in \fig{main-graphs}d. It is also notable that the improvement in calibration is achieved without any loss of accuracy.
}
\label{fig:densenet-calibrated}
\vspace{-1ex}
\end{figure}

\subsection{Recalibration}\label{sec:recal}
\vspace{-1ex}
We suppose that the classifier $f = f_\theta$ is fixed, through training on 
the training set.
Typically, if the classifier is tested on
the training set, it is very close to being calibrated.  However, if a
classifier
$f$ is then tested on a different set of data, it may be substantially
mis-calibrated.
See \fig{main-graphs}.  

Our method of calibration is to find a further mapping $\gamma: [0,1]
\rightarrow [0,1]$, 
such that $\gamma \circ f_k$ is calibrated. 
This is easily obtained from the 
 direct mapping from score $f_k(\v x)$ to
$P(k ~|~ f_k(\v x))$ (refer to \fig{main-graphs}d).
In equations, 
$
\gamma(\sigma) = h'(s^{-1}(\sigma))
$.
The function $h'$ is known analytically, from fitting a spline to $h(t)$ and
taking its derivative.
The function $s^{-1}$ is a mapping from the given score $\sigma$
to its fractile $s^{-1}(\sigma)$.
Note that, a held out calibration set is used to fit the splines and 
the obtained recalibration function $\gamma$ is evaluated on an unseen 
test set.

To this end, given a sample $\v x$ from the test set with 
$f_k(\v x) = \sigma$, 
one can compute $h'(s^{-1}(\sigma))$ directly in one
step by interpolating
its value between the values of $h'(f_k (\v x_i))$ and $h'(f_k (\v x_{i+1}))$ where
$\v x_i$ and $\v x_{i+1}$
are two samples from the calibration set, with closest scores 
on either side of $\sigma$. 
Assuming the samples in the calibration set are ordered, the 
samples $\v x_i$ and $\v x_{i+1}$ can be quickly located using binary search.
Given a reasonable number of samples in the calibration set, 
(usually in the order of thousands), this
can be very accurate.
In our experiments, improvement in calibration is observed in the test set
with no difference to the accuracy of the network (refer to \fig{densenet-calibrated}d).
In practice, spline fitting is much faster than one forward pass through the network and it is highly scalable compared to learning based calibration methods.



\section{Related Work}
\SKIP{
Obtaining well calibrated predictors has been studied by meteorologists and
statisticians for several decades. 
Brier~\cite{brier1950verification} studied the verification of weather forecasts
when they are expressed in terms of probabilities and introduced the famous
Brier score. 
Many follow up works studied the concept of
reliability~\cite{murphy1977reliability}, proper scoring
rules~\cite{winkler1968good}, and calibration and
refinement~\cite{degroot1983comparison} in a similar context.
}

\paragraph{Modern calibration methods.}
In recent years, neural networks are shown to overfit to the Negative
Log-Likelihood (NLL) loss and in turn produce overconfident predictions which is cited as 
the main reason for miscalibration~\citenew{guo2017calibration}.
To this end, modern calibration methods can be broadly categorized into 1) methods
that adapt the training procedure of the classifier, and 2) methods that learn a
recalibration function post training.
Among the former, the main idea is to increase the entropy of the classifier to avoid
overconfident predictions, which is accomplished via modifying the training
loss~\citenew{kumar2018trainable,mukhoti2020calibrating,seo2019learning}, label
smoothing~\citenew{muller2019does,pereyra2017regularizing}, and data augmentation
techniques~\citenew{thulasidasan2019mixup,yun2019cutmix,mixup}.

On the other hand, we are interested in calibrating an already trained classifier that eliminates the need for training from scratch. 
In this regard, a popular approach is Platt scaling~\citenew{platt1999probabilistic} which
transforms the outputs of a binary
classifier into probabilities by fitting a
scaled logistic function on a held out calibration set.
Similar approaches on binary classifiers include Isotonic
Regression~\citenew{zadrozny2001obtaining}, histogram and
Bayesian binning~\citenew{naeini2015obtaining,zadrozny2001obtaining}, and Beta
calibration~\citenew{kull2017beta}, which are later extended to the multiclass
setting~\citenew{guo2017calibration,kull2019beyond,zadrozny2002transforming}.
Among these, the most popular method is temperature
scaling~\citenew{guo2017calibration},
which learns a single scalar on a held out set to calibrate the
network predictions.
Despite being simple and one of the early works, temperature scaling is the method to
beat in calibrating modern networks.
Our approach falls into this category, however, as opposed to minimizing a loss
function, we obtain a recalibration function via spline-fitting, which directly maps the
classifier outputs to the calibrated probabilities.

\paragraph{Calibration measures.} 
Expected Calibration Error (ECE)~\citenew{naeini2015obtaining} is the most popular
measure in
the literature, however, it has a weakness that the resulting calibration error
depends on the
histogram binning scheme such as the bin endpoints and the number of bins.
Even though, some improvements have been
proposed~\citenew{nixon2019measuring,vaicenavicius2019evaluating}, the binning
scheme has not been eliminated and it is recently shown that
any binning scheme leads to underestimated calibration
errors~\citenew{kumar2019verified,widmann2019calibration}.
Note that, there are binning-free metrics exist such as Brier
score~\citenew{brier1950verification}, NLL, and kernel based metrics for the
multiclass setting~\citenew{kumar2018trainable,widmann2019calibration}.
Nevertheless, the Brier score and NLL measure a combination of calibration error and
classification error (not just the calibration which is the focus).
Whereas kernel based metrics, besides being computationally expensive, 
measure the calibration of the predicted probability vector rather than the 
classwise calibration error~\citenew{kull2019beyond} 
(or top-$r$ prediction) which is typically 
the quantity of interest. 
To this end, we introduce a binning-free calibration measure based on the
classical KS-test, which has the same benefits as ECE and provides effective
visualizations similar to reliability diagrams.
Furthermore, KS error can be shown to be a special case of kernel based
measures~\citenew{gretton2012kernel}.

\section{Experiments}

\SKIP{
\begin{table}[t!]
\scriptsize
\begin{tabular}{ll|cccccc}
\toprule
Dataset &
  Model &
  Uncalibrated &
  Temp. Scaling &
  Vector Scaling &
  MS-ODIR &
  Dir-ODIR &
  Ours (Spline) \\ \midrule
\multirow{5}{*}{CIFAR-10} &
  Resnet-110 &
  0.04750 &
  \underline{0.00916} &
  0.00996 &
  0.00977 &
  0.01060 &
  \textbf{0.00643} \\
 &
  Resnet-110-SD &
  0.04102 &
  0.00362 &
  0.00430 &
  \underline{0.00358} &
  0.00389 &
  \textbf{0.00269} \\
  &
  DenseNet-40 &
  0.05493 &
  0.00900 &
  \underline{0.00890} &
  0.00897 &
  0.01057 &
  \textbf{0.00773} \\
 &
  Wide Resnet-32 &
  0.04475 &
  0.00296 &
  \textbf{0.00267} &
  0.00305 &
  \underline{0.00291} &
  0.00367 \\
 &
  Lenet-5 &
  0.05038 &
  0.00799 &
  0.00839 &
  \underline{0.00646} &
  0.00854 &
  \textbf{0.00348} \\ \midrule
\multirow{5}{*}{CIFAR-100} &
  Resnet-110 &
  0.18481 &
  \underline{0.01489} &
  0.01827 &
  0.02845 &
  0.02575 &
  \textbf{0.00575} \\
 &
  Resnet-110-SD &
  0.15832 &
  \textbf{0.00748} &
  0.01303 &
  0.03572 &
  0.01645 &
  \underline{0.01028} \\
 &
  DenseNet-40 &
  0.21156 &
  \textbf{0.00304} &
  0.00483 &
  0.02350 &
  0.00618 &
  \underline{0.00454} \\
 &
  Wide Resnet-32 &
  0.18784 &
  \underline{0.01130} &
  0.01642 &
  0.02524 &
  0.01788 &
  \textbf{0.00930} \\
 &
  Lenet-5 &
  0.12117 &
  0.01215 &
  \underline{0.00768} &
  0.01047 &
  0.02125 &
  \textbf{0.00391} \\
\midrule
\multirow{2}{*}{ImageNet} &
  Densenet-161 &
  0.05721 &
  \underline{0.00744} &
  0.02014 &
  0.04723 &
  0.03103 &
  \textbf{0.00406} \\
 &
  Resnet-152 &
  0.06544 &
  \underline{0.00791} &
  0.01985 &
  0.05805 &
  0.03528 &
  \textbf{0.00441} \\
\midrule
SVHN &
  Resnet-152-SD &
  0.00852 &
  \textbf{0.00552} &
  0.00570 &
  0.00573 &
  0.00607 &
  \underline{0.00556} \\
\bottomrule
\end{tabular}
    \vspace{1ex}

    \caption{\em KS error for top-1 prediction (lowest in bold and second lowest underlined) on various image classification datasets and models with different calibration methods. Note, our method consistently reduces calibration error to $<\textbf{1}$\% in all experiments, outperforming state-of-the-art methods.
    }
    \label{tab:res_PKSE}  
    \vspace{-2ex}
\end{table}
}

\begin{table}[t!]
\scriptsize
\begin{tabular}{ll|cccccc}
\toprule
Dataset &
  Model &
  Uncalibrated &
  Temp. Scaling &
  Vector Scaling &
  MS-ODIR &
  Dir-ODIR &
 \textbf{Ours (Spline)} \\ \midrule
\multirow{5}{*}{CIFAR-10} &
  Resnet-110 &
  4.750 &
  \underline{0.916} &
  0.996 &
  0.977 &
  1.060 &
  \textbf{0.643} \\
 &
  Resnet-110-SD &
  4.102 &
  0.362 &
  0.430 &
  \underline{0.358} &
  0.389 &
  \textbf{0.269} \\
  &
  DenseNet-40 &
  5.493 &
  0.900 &
  \underline{0.890} &
  0.897 &
  1.057 &
  \textbf{0.773} \\
 &
  Wide Resnet-32 &
  4.475 &
  0.296 &
  \textbf{0.267} &
  0.305 &
  \underline{0.291} &
  0.367 \\
 &
  Lenet-5 &
  5.038 &
  0.799 &
  0.839 &
  \underline{0.646} &
  0.854 &
  \textbf{0.348} \\ \midrule
\multirow{5}{*}{CIFAR-100} &
  Resnet-110 &
  18.481 &
  \underline{1.489} &
  1.827 &
  2.845 &
  2.575 &
  \textbf{0.575} \\
 &
  Resnet-110-SD &
  15.832 &
  \textbf{0.748} &
  1.303 &
  3.572 &
  1.645 &
  \underline{1.028} \\
 &
  DenseNet-40 &
  21.156 &
  \textbf{0.304} &
  0.483 &
  2.350 &
  0.618 &
  \underline{0.454} \\
 &
  Wide Resnet-32 &
  18.784 &
  \underline{1.130} &
  1.642 &
  2.524 &
  1.788 &
  \textbf{0.930} \\
 &
  Lenet-5 &
  12.117 &
  1.215 &
  \underline{0.768} &
  1.047 &
  2.125 &
  \textbf{0.391} \\
\midrule
\multirow{2}{*}{ImageNet} &
  Densenet-161 &
  5.721 &
  \underline{0.744} &
  2.014 &
  4.723 &
  3.103 &
  \textbf{0.406} \\
 &
  Resnet-152 &
  6.544 &
  \underline{0.791} &
  1.985 &
  5.805 &
  3.528 &
  \textbf{0.441} \\
\midrule
SVHN &
  Resnet-152-SD &
  0.852 &
  \textbf{0.552} &
  0.570 &
  0.573 &
  0.607 &
  \underline{0.556} \\
\bottomrule
\end{tabular}
    \vspace{1ex}

    \caption{\em KS Error (in \%) for top-1 prediction (with lowest in bold and second lowest underlined) on various image classification datasets and models with different calibration methods. Note, our method consistently reduces calibration error to $<\textbf{1}$\% in almost all experiments, outperforming state-of-the-art methods.
    }
    \label{tab:res_PKSE}  
    \vspace{-2ex}
\end{table}
\SKIP{
\begin{table}[t!]
\scriptsize
\begin{tabular}{ll|cccccc}
\toprule
Dataset &
  Model &
  Uncalibrated &
  Temp. Scaling &
  Vector Scaling &
  MS-ODIR &
  Dir-ODIR &
  Ours (Spline) \\ \midrule
\multirow{5}{*}{CIFAR-10} & Resnet-110 & 0.03011 & 0.00947 & 0.00948 & \underline{0.00598} & 0.00953 & \textbf{0.00347} \\
 & Resnet-110-SD & 0.02716 & 0.00478 & 0.00486 & \underline{0.00401} & 0.00500 & \textbf{0.00310} \\
 & DenseNet-40 & 0.03342 & \textbf{0.00535} & \underline{0.00543} & 0.00598 & 0.00696 & 0.00695 \\
 & Wide Resnet-32 & 0.02669 & 0.00426 & \underline{0.00369} & 0.00412 & 0.00382 & \textbf{0.00364} \\
 & Lenet-5 & 0.01708 & \underline{0.00367} & \textbf{0.00279} & 0.00409 & 0.00426 & 0.00837 \\ \midrule
\multirow{5}{*}{CIFAR-100} 
& Resnet-110 & 0.04731 & 0.01401 & 0.01436 & \underline{0.00961} & 0.01269 & \textbf{0.00371} \\
 & Resnet-110-SD & 0.03923 & \textbf{0.00315} & \underline{0.00481} & 0.00772 & 0.00506 & 0.00595 \\
 & DenseNet-40 & 0.05803 & 0.00305 & 0.00653 & \underline{0.00219} & \textbf{0.00135} & 0.00903 \\
 & Wide Resnet-32 & 0.05349 & 0.00790 & 0.01095 & \underline{0.00646} & 0.00845 & \textbf{0.00372} \\
 & Lenet-5 & 0.02615 & 0.00571 & \underline{0.00439} & \textbf{0.00324} & 0.00799 & 0.00587 \\
\midrule
\multirow{2}{*}{ImageNet} 
& Densenet-161 & 0.01689 & \underline{0.01044} & 0.01166 & 0.01288 & 0.01321 & \textbf{0.00178} \\
 & Resnet-152 & 0.01793 & \underline{0.01151} & 0.01264 & 0.01660 & 0.01430 & \textbf{0.00580} \\
\midrule
SVHN 
& Resnet-152-SD 
& 0.00373 & 0.00226 & \textbf{0.00216} & 0.00973 & \underline{0.00218} & 0.00492 \\
\bottomrule
\end{tabular}
    \vspace{1ex}

    \caption{\em KS Error for top-2 prediction\aj{updated with within top-2 results} (with lowest in bold and second lowest underlined) on various image classification datasets and models with different calibration methods. Again, our method consistently reduces calibration error to $<\textbf{1}$\% in all experiments.}
    \label{tab:res_KSE2}  
\end{table}
}

\begin{table}[t!]
\scriptsize
\begin{tabular}{ll|cccccc}
\toprule
Dataset &
  Model &
  Uncalibrated &
  Temp. Scaling &
  Vector Scaling &
  MS-ODIR &
  Dir-ODIR &
  \textbf{Ours (Spline)} \\ \midrule
\multirow{5}{*}{CIFAR-10} & Resnet-110 & 3.011 & 0.947 & 0.948 & \underline{0.598} & 0.953 & \textbf{0.347} \\
 & Resnet-110-SD & 2.716 & 0.478 & 0.486 & \underline{0.401} & 0.500 & \textbf{0.310} \\
 & DenseNet-40 & 3.342 & \textbf{0.535} & \underline{0.543} & 0.598 & 0.696 & 0.695 \\
 & Wide Resnet-32 & 2.669 & 0.426 & \underline{0.369} & 0.412 & 0.382 & \textbf{0.364} \\
 & Lenet-5 & 1.708 & \underline{0.367} & \textbf{0.279} & 0.409 & 0.426 & 0.837 \\ \midrule
\multirow{5}{*}{CIFAR-100} 
& Resnet-110 & 4.731 & 1.401 & 1.436 & \underline{0.961} & 1.269 & \textbf{0.371} \\
 & Resnet-110-SD & 3.923 & \textbf{0.315} & \underline{0.481} & 0.772 & 0.506 & 0.595 \\
 & DenseNet-40 & 5.803 & 0.305 & 0.653 & \underline{0.219} & \textbf{0.135} & 0.903 \\
 & Wide Resnet-32 & 5.349 & 0.790 & 1.095 & \underline{0.646} & 0.845 & \textbf{0.372} \\
 & Lenet-5 & 2.615 & 0.571 & \underline{0.439} & \textbf{0.324} & 0.799 & 0.587 \\
\midrule
\multirow{2}{*}{ImageNet} 
& Densenet-161 & 1.689 & \underline{1.044} & 1.166 & 1.288 & 1.321 & \textbf{0.178} \\
 & Resnet-152 & 1.793 & \underline{1.151} & 1.264 & 1.660 & 1.430 & \textbf{0.580} \\
\midrule
SVHN 
& Resnet-152-SD 
& 0.373 & 0.226 & \textbf{0.216} & 0.973 & \underline{0.218} & 0.492 \\
\bottomrule
\end{tabular}
    \vspace{1ex}

    \caption{\em KS Error (in \%) for top-2 prediction (with lowest in bold and second lowest underlined) on various image classification datasets and models with different calibration methods. Again, our method consistently reduces calibration error to $<1$\% (less then $0.7$\%, except for one case), in all experiments, the only one of the methods to achieve this.}
    \label{tab:res_KSE2}  
\end{table}

\paragraph{Experimental setup.}
We evaluate our proposed calibration method on four different image-classification datasets namely CIFAR-10/100~\citenew{krizhevsky2009learning}, SVHN~\citenew{netzer2011reading} and ImageNet~\citenew{deng2009imagenet} using LeNet~\citenew{lecun1998gradient}, ResNet~\citenew{he2016deep}, ResNet with stochastic depth~\citenew{huang2017densely}, Wide ResNet~\citenew{zagoruyko2016wide} and DenseNet~\citenew{huang2017densely} network architectures against state-of-the-art
methods that calibrate post training.
We use the pretrained network logits\footnote{Pre-trained network logits are obtained from \url{https://github.com/markus93/NN\_calibration}.} for spline fitting where we choose validation set as the calibration set, similar to the standard practice. Our final results for calibration are then reported on the test set of all datasets. Since ImageNet does not comprise the validation set, test set is divided into two halves: calibration set and test set.
We use the natural cubic spline fitting method (that is, cubic splines 
with linear run-out) with $6$ knots for all our experiments. 
Further experimental details are provided in the supplementary. For baseline methods namely: Temperature scaling, Vector scaling, Matrix scaling with ODIR (Off-diagonal and Intercept Regularisation), and Dirichlet calibration, we use the implementation of Kull~\etal~\citenew{kull2019beyond}. 

\paragraph{Results.} We provide comparisons of our method using proposed KS error for the top most prediction against state-of-the-art calibration methods namely temperature scaling~\citenew{guo2017calibration}, vector scaling, MS-ODIR, and Dirichlet Calibration (Dir-ODIR)~\citenew{kull2019beyond} in \tabref{tab:res_PKSE}. Our method reduces calibration error to $1\%$ in almost all experiments performed on different datasets without any loss in accuracy. It clearly reflects the efficacy of our method irrespective of the scale of the dataset as well as the depth of the network architecture. It consistently performs better than the recently introduced Dirichlet calibration and Matrix scaling with ODIR~\citenew{kull2019beyond} in all the experiments. 
Note this is consistent with the top-1 calibration results reported in Table~15 of~\citenew{kull2019beyond}.
The closest competitor to our method is temperature scaling, against which our method performs better in 9 out of 13 experiments. Note, in the cases where temperature scaling outperforms our method, the gap in KS error between the two methods is marginal ($<\textbf{0.3}\%$) and our method is the second best. We
provide comparisons using other calibration metrics in the supplementary.

From the practical point of view, it is also important for a network to be calibrated for top second/third predictions and so on. We thus show comparisons for top-2 prediction KS error in \tabref{tab:res_KSE2}. An observation similar to the one noted in \tabref{tab:res_PKSE} can be made for the top-2 predictions as well. Our method achieves $<\textbf{1}$\% calibration error in all the experiments. It consistently performs well especially for experiments performed on large scale ImageNet dataset where it sets new \emph{state-of-the-art for calibration}. We would like to emphasize here, though for some cases Kull~\etal~\citenew{kull2019beyond} and Vector Scaling perform better than our method in terms of top-2 KS calibration error, overall (considering both top-1 and top-2 predictions) our method performs better.


\section{Conclusion}
In this work, we have introduced a binning-free calibration metric based on 
the Kolmogorov-Smirnov test to measure classwise or (within)-top-$r$ calibration
errors. 
Our KS error eliminates the shortcomings of the popular ECE measure and its
variants while accurately measuring the expected calibration error and provides
effective visualizations similar to reliability diagrams.
Furthermore, we introduced a simple and effective calibration method based on
spline-fitting which does not involve any learning and yet consistently yields 
the lowest calibration error in the majority of our experiments.
We believe, the KS metric would be of wide-spread use to measure classwise calibration and our spline method would inspire learning-free approaches to neural network calibration.
We intend to focus on calibration beyond classification problems as future work.





\fi

\iffinal 
\section{Acknowledgements}
The work is supported by the Australian Research Council Centre of Excellence for Robotic Vision (project number CE140100016). We would also like to thank Google Research and Data61, CSIRO for their support.
\SKIP{
\section*{Broader Impact}

Predicting probability estimates that reflect the true likelihood of being correct, also known as confidence calibration, is essential when neural networks--or any other predictive system for that matter--are employed in safety-critical applications. In such contexts, the downstream decision making critically depends on the predicted probabilities. Consequently, neural networks should also provide a calibrated confidence measure and not just a prediction.  Safety-critical domains where the methodology is broadly applicable include systems for medical diagnosis, self-driving, authentication, fraud detection or stock-market analysis, among many others. 
This work proposes a novel methodology to improve it towards potentially broad practical impact. The neural network calibration process can also be seen as reducing certain learning biases that eventually lead to incorrect probabilities and decision-making failure. In large-scale decision making systems, classification networks must not just be accurate, but should also have built in mechanisms that would indicate when they are likely to fail. Enhanced confidence models would help establish trust with the user, support the integration into larger probabilistic systems, and enable superior human interpretability in the long-run.
We do not foresee any direct negative impacts, however, it may contribute to the societal impacts of deep learning technology, both positive and negative.
}
\fi
\ifarxiv
\clearpage
\appendices

\fi

\bibliographystyle{iclr2021_conference}
\bibliography{spline-mainpaper}
\appendices

\end{document}